\definecolor{linen}{RGB}{250,240,230} 
\newmdtheoremenv[style=theoremstyle]{thm}{Theorem}
\newmdtheoremenv[style=theoremstyle]{lem}{Lemma}
\newmdtheoremenv[style=defstyle]{ass}{Assumption}
\newmdtheoremenv[style=remarkstyle]{rem}{Remark}
\newmdtheoremenv[style=remarkstyle]{cor}{Corollary}
\newmdtheoremenv[style=theoremstyle]{defi}{Definition}
\definecolor{darkgreen}{rgb}{0,0.5,0}
\definecolor{darkred}{rgb}{0.7,0,0}
\definecolor{teal}{rgb}{0.3,0.8,0.8}
\definecolor{orange}{rgb}{1.0,0.5,0.0}
\definecolor{purple}{rgb}{0.8,0.0,0.8}
\newcommand{\piref}{\pi_\text{ref}}
\newcommand{\pisft}{\pi_\text{ref}} %
\icmltitlerunning{MaxMin Approach to Align with Diverse Human Preferences}
\begin{document}

\twocolumn[
\icmltitle{MaxMin-RLHF: Alignment with Diverse Human Preferences}

\icmlsetsymbol{equal}{*}

\begin{icmlauthorlist}
\icmlauthor{Souradip Chakraborty}{equal,umd}
\icmlauthor{Jiahao Qiu}{equal,princ}
\icmlauthor{Hui Yuan}{princ}
\icmlauthor{Alec Koppel}{jpmc}
\icmlauthor{Dinesh Manocha}{umd}
\icmlauthor{Furong Huang}{umd}
\icmlauthor{Amrit Singh Bedi}{ucf}
\icmlauthor{Mengdi Wang}{princ}
\end{icmlauthorlist}

\icmlaffiliation{umd}{Department of Computer Science, University of Maryland, College Park, MD,  USA.}
\icmlaffiliation{princ}{Department of Electrical and Computer Engineering, Princeton University, NJ, USA.}
\icmlaffiliation{jpmc}{JP Morgan Chase AI Research,  New York, USA.}
\icmlaffiliation{ucf}{Department of Computer Science,, University of Central Florida, FL, USA.}

\icmlcorrespondingauthor{Souradip Chakraborty}{schakra3@umd.edu}

\icmlkeywords{Machine Learning, ICML}

\vskip 0.3in
]

\printAffiliationsAndNotice{\icmlEqualContribution} 
\begin{abstract}
Reinforcement Learning from Human Feedback (RLHF) aligns language models to human preferences by employing a singular reward model derived from preference data.  However, the single reward model overlooks the rich diversity of human preferences inherent in data collected from multiple users. In this work, we first derive an impossibility result of alignment with single reward RLHF, thereby highlighting its insufficiency in representing diverse human preferences. Next, we propose to learn a mixture of reward models via an expectation-maximization algorithm and solve a MaxMin alignment objective inspired by the Egalitarian principle in social choice theory to better honor diverse human preferences. We present comprehensive experimental results on small-scale (GPT-2) and large-scale language (with Tulu2-7B)) and show the efficacy of the proposed approach in the presence of diversity among human preferences.  We remark that our findings in this work are not only limited to language models but also extend to reinforcement learning in general. 

\end{abstract}
\section{Introduction}
 The alignment problem, central to developing and fine-tuning current large language models (LLMs), represents a crucial challenge in artificial intelligence,  especially in ensuring these models operate in harmony with human values and preferences \citep{wang2023aligning, christian2020alignment}. Reinforcement learning from human feedback (RLHF) has emerged as a pivotal approach to alignment problems, specifically aligning LLM \citep{wang2023aligning, ouyang2022training, openai1, openai2}.  RLHF starts with pre-training a generative (LLM) model and subsequently fine-tuning it through supervised learning on a high-quality dataset on various downstream tasks. RLHF operates in three steps (a) supervised fine-tuning, (2) reward learning, and (3) RL fine-tuning. Step 2 learns a reward function that is expected to represent the preference feedback of the human population.  

However, there has been minimal emphasis on accurately representing the diversity of human preferences and the broad spectrum of user populations. As highlighted by \citet{diversity1,aroyo2023reasonable, aroyo2023dices}, \textit{``the notion of `one truth' in crowdsourcing responses is a myth"} and we need to account for the diversity in opinions and preferences. 

Despite the criticality, most of the latest RLHF approaches ignore the consideration of the diversity in human preference feedback by aligning the language model with a single reward \citep{wang2023aligning, christian2020alignment,openai1, openai2}. The assumption of a single ground truth reward is restrictive and can potentially subdue the preferences or opinions of minority groups, leading to societal biases (Figure \ref{fig:bias_RLHF}). To mitigate this issue, some of the recent research proposes to learn multiple reward functions, which can then be aggregated in arbitrary manners \citep{agg1}.  On the other hand, \citep{agg2} adopts a consensus-based method for aggregating human representations by emphasizing specific principles \citep{bias1, bias2}, which might result in the under-representation of marginalized groups \citep{ramé2023rewarded}. Another line of research focuses on the aspect of designing multi-policy strategies by fine-tuning personalized language models towards individual rewards \citep{jang2023personalized, ramé2023rewarded, ji2023beavertails}. 

As mentioned above, the recent literature has brought attention to the challenge of aligning single utility RLHF with diverse preferences. However, a thorough understanding of how the diversity within human sub-populations influences the overall alignment objective remains elusive. Consequently, this prompts us to pose the following question:
\emph{Is a single reward RLHF pipeline sufficient to align with diverse human preferences?}

In this work, we present negative results for the above question in this work by demonstrating the impossibility of alignment using single reward RLHF (Theorem \ref{theorem_2_impossibility}).  We introduce a notion of diversity between human subpopulations due to the differences in preference distributions and establish lower bounds on the alignment performance of single reward RLHF. However, this impossibility result naturally {raises} another important question:

{\emph{What strategies can we design (or what methods can we adopt) to align with diverse human preferences?}}

In response to this question, we draw inspiration from the Egalitarian rule \cite{Sen2017} and aim to maximize the social utility objective for alignment.  We summarize our contributions as follows.
\begin{figure*}[t]
    \centering
\includegraphics[scale=0.5]{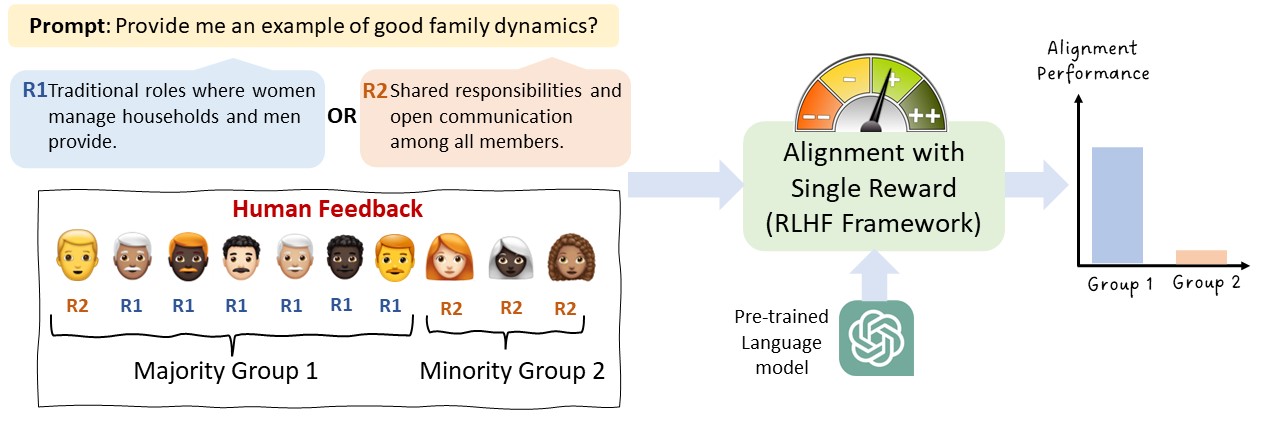}
\vspace{-0mm}
    \caption{This figure highlights the drawbacks of a single reward-based current state-of-the-art alignment framework called Reinforcement Learning from Human Feedback (RLHF) \citep{christian2020alignment}. In this figure, we demonstrate a setting where, due to the inherent presence of majority and minority user groups who provide human feedback, single reward-based RLHF alignment would align the language model towards the majority group while completely ignoring the minority use group preferences. We provide a theoretical justification in Section \ref{impossibility} and empirical evidence in Section \ref{experiments}.}
    \label{fig:bias_RLHF}
\end{figure*}

 \textbf{(1) An impossibility result of alignment with single reward-based RLHF.} We first introduce the notation of diversity  (Definition \ref{diver}) and then derive lower bounds on the reward model suboptimality (Lemma \ref{theorem_1_impossibility0}) in terms of diversity in human sub-population preference distributions. Finally, we establish a lower bound (Theorem \ref{theorem_2_impossibility}) on the alignment gap due to the diversity in the human preference feedback.  True to our knowledge, our work is the first to report such a result in the RLHF literature. 
    
  \textbf{(2) Max-Min RLHF alignment with diverse user preferences.} We propose to learn a mixture of preference distributions through the application of multiple reward functions using the Expectation-Maximization (EM) algorithm (Algorithm \ref{alg:cdm}). Upon obtaining multiple reward functions specific to different human sub-populations, we introduce the MaxMin-RLHF algorithm as a strategy to align language models with social utility objectives (Algorithm \ref{alg:cdm2}).
  
 \textbf{(3) A comprehensive empirical study. } 
We present a detailed empirical analysis of our proposed concepts on two language models: GPT-2 and Tulu-7B. Initially, we provide empirical evidence highlighting the impossibilities of alignment with single reward RLHF, followed by demonstrating the feasibility and effectiveness of MaxMin-RLHF in achieving social utility objectives. Our approach outperforms existing methodologies, showcasing significant performance improvements.

\section{Preliminaries}\label{section:prelims}

Let us start by defining a language model mathematically. We denote a vocabulary set as $\mathcal{V}$ and a language model by a mapping $\pi_{\theta}$ (parameterized by $\theta$).  A language model $\pi_{\theta}$ takes a sequence of tokens (called prompt) as input denoted by $\mathbf{x} := \{x_{1}, x_{2}, \cdots, x_{N}\}$, where each token $x_i \in \mathcal{V}$. The prompt $\mathbf{x}\in\mathcal{X}$, where $\mathcal{X}$ is the set of prompts,  is fed as input to the language model, and it generates output response  $\mathbf{y}\sim \pi_{\theta}(\cdot~|~\mathbf{x})$.

\noindent \textbf{RLHF pipeline.} We start by considering the RLHF pipeline in \citet{ziegler2020finetuning}, which has also been adopted in subsequent works \citep{stiennon2022learning, bai2022training, ouyang2022training}. It consists of three steps detailed as follows: 

\noindent\textbf{Step 1: Supervised Fine-tuning (SFT)}: In this phase, a generic pre-trained LM is fine-tuned with supervised learning on a high-quality dataset for the downstream task(s) of interest, such as dialogue, instruction following, summarization, etc., to obtain a model $\pisft$. 

\noindent \textbf{Step 2: Reward Modelling}: In the second phase, the SFT model is queried with prompts $\mathbf{x}\in\mathcal{X}$ to produce pairs of responses $(\mathbf{y}_1, \mathbf{y}_2)\sim \pi_{\theta}(\cdot~|~\mathbf{x}) $ which are then presented to human labelers for preference evaluation, and $\mathbf y_1$, $\mathbf y_2$ denotes the preferred and dispreferred response, respectively. The preference distribution under the Bradley-Terry (BT) preference model \citep{bradley1952rankanalysis} is written as
\begin{equation}\label{eq:bradley-terry}
    p^*(\mathbf y_1 \!\succ\! \mathbf y_2 \mid \mathbf x)\!=\!\frac{\exp\left(r^*(\mathbf y_1, \mathbf x)\right)}{\exp\left(r^*(\mathbf y_1, \mathbf x)\right) + \exp\left(r^*(\mathbf y_2, \mathbf x)\right)},
\end{equation}
where $r^*(\mathbf y, \mathbf x)$ is the latent reward model. With a static dataset $\mathcal{D}=\bigl\{\mathbf x^{(i)}, \mathbf y_1^{(i)}, \mathbf y_2^{(i)}\bigr\}_{i=1}^N$ sampled from $p^*$, we can learn a parameterized reward model $r_{\phi}(\mathbf y, \mathbf x)$ via maximum likelihood estimation. Framing the problem as a binary classification, we have the negative log-likelihood loss:
\begin{equation}\label{eq:reward_model}
    \mathcal{L}_R(r_{\phi}, \mathcal{D}) \!=\! -\mathbb{E}_{(\mathbf x, \mathbf y_1, \mathbf y_2)\sim \mathcal{D}}\bigl[\log \sigma(r_{\phi}(\mathbf y_1, \mathbf x)- r_{\phi}(\mathbf y_2, \mathbf x))\bigr]
\end{equation}
where $\sigma$ is the logistic function. 

\noindent \textbf{Step 3: RL Fine-Tuning}: In the final step, the optimal policy $\pi_{r_{\phi}}^*$ under the reward $r_{\phi}$  is obtained by solving the KL-regularized reward maximization problem given by
\begin{align}\label{eq:RL}
\max_{{\pi}}  \mathbb{E}_{\mathbf x\sim \mathcal{P},\mathbf{y}\sim \pi(\cdot~|~\mathbf{x}) }\bigl[r_{\phi}(\mathbf y, \mathbf x) - \beta\mathbb{D}_{\textrm{KL}}\bigl[\pi(\cdot|\mathbf{x})|| \pi_{\text{ref}}(\cdot|\mathbf{x})\bigr]\bigr], 
\end{align}
where, $\beta>0$ controls the deviation from the base reference policy $\piref$.

\section{An Impossibility Result for  Single Reward RLHF with Diverse Preferences}\label{impossibility}
In this section, we mathematically prove the impossibility of aligning language models with diverse human preferences with the single reward RLHF framework. We start by discussing the motivation and mathematical definition of diversity in human preferences in Section \ref{sub1}, then connect the reward learning step of the RLHF pipeline with diversity in Section \ref{reward_2earning}, and then finally prove the impossibility of language model alignment in Section \ref{sub3} by connecting Step 3 of RLHF pipeline with human preference diversity. 
\subsection{Diversity in Human Preferences}\label{sub1}
\begin{figure}[t]
    \centering
    \includegraphics[scale=0.9]{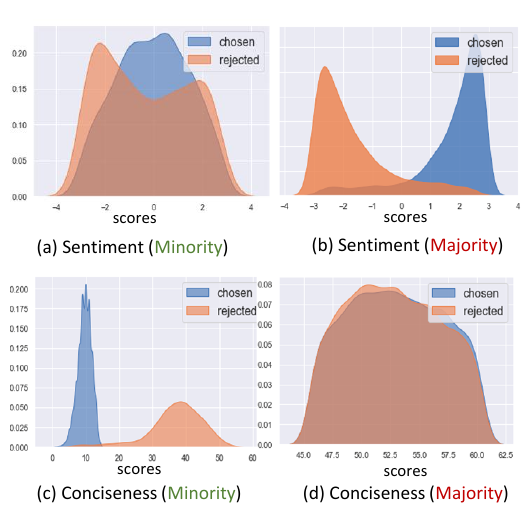}
    \caption{\textbf{(Diversity in Preferences.)} This figure illustrates the diversity in preferences among two distinct human groups using the IMDB movie review dataset \citep{maas-EtAl:2011:ACL-HLT2011}. We categorize these groups as `majority' and `minority.' 
(a) and (c) display minority sentiment and conciseness preferences. We note that the minority group strongly favors concise responses (as seen in the blue curve in (c)), while showing indifference towards sentiment (as indicated by overlapping curves in (a)).
In contrast, (b) and (d) depict that the majority clearly prioritizes positive sentiment (as evidenced by a significant gap between chosen and rejected trajectories in (b)), while displaying little concern for conciseness (as indicated by overlapping curves in (d)).}
    \label{fig:diversity}
\end{figure}
The main shortcoming of state-of-the-art alignment approaches arises from the underlying assumption that human preferences are derived from a single latent reward model $r^*(\textbf{y},\textbf{x})$ (cf. \eqref{eq:reward_model}), which fails to account for the inherent diversity among the human sub-populations (see Figure \ref{fig:diversity}).  One of the key reasons (discussed in Appendix \ref{related_works_appendix} in ) for the diverse human preferences is the varied socio-demographic and socio-cultural backgrounds of human sub-populations \citep{aroyo2023dices,aroyo2023reasonable}. For example, population groups with diverse demographic markers such as race, ethnicity, age groups, genders, etc., have highly varied preferences as highlighted in \citep{aroyo2023dices, aroyo2023reasonable, denton2021ground}. Such diversity inevitably leads to natural sub-groups of populations among humans. Modeling this diversity in preferences for the fine-tuning of language models in RLHF is crucial, which, to the best of our knowledge, is currently missing from the literature. 

\noindent \textbf{Sub-population Preference Distributions:} 
Let us consider the human population providing the preference feedback represented by $\mathcal{H}$. We can write the preference  distribution \citep{openai1, openai2} as %
\begin{align}\label{h1}
   p^*(&\mathbf y_1 \succ \mathbf y_2 \mid \mathbf x)
   \\
   &=\mathbb{E}_{h\in\mathcal{H}} [\mathbb{I} (\texttt{h prefers y_1 over y_2|x)}], \nonumber
\end{align}
 where $p^*(\mathbf y_1 \succ \mathbf y_2 \mid \mathbf x)$ is the probability of preferring $\bby_1$ over $\bby_2$ for any given pair $(\mathbf y_1 ,\mathbf y_2)$ corresponding to prompt $\mathbf x$. In \eqref{h1}, the expectation is over a finite set of humans $h\in{\mathcal{H}}$. We next introduce the concept of human subpopulations as a hidden random variable, denoted as $u$ with distribution $\eta$, to account for the inherent diversity within the population. Specifically, $u$ represents the human subpopulation defined over a finite discrete set $\mathcal{U} := \{\mathcal{H}_1, \mathcal{H}_2, \cdots, \mathcal{H}_{|\mathcal{U}|}\}$, such that $\mathcal{H} = \bigcup_{u =1}^{|\mathcal{U}|} \mathcal{H}_u$. The cardinality of the set $\mathcal{U}$ represents the number of sub-populations/groups present in the total human population $\mathcal{H}$.

 Therefore, similar to \eqref{h1}, we can define a human-subpopulation or group-specific preference distribution for a given pair of responses $(\mathbf y_1,\mathbf y_2)$ and prompt $\mathbf x$ as 
 \begin{align}\label{group_specific}
       p^*_u(&\mathbf y_1 \succ \mathbf y_2 \mid \mathbf x)
   \\
   &=\mathbb{E}_{h\in\mathcal{H}_u} [\mathbb{I} (\texttt{h prefers y_1 over y_2|x)}], \nonumber
 \end{align}
 for all groups in $\mathcal{U}$. Next, we define the preference diversity among the human population in Definition \ref{diver} as follows. 

 \begin{defi}[Diversity in Human Preferences]
 \label{diver}
 \vspace{2mm}
     Consider a human population $\mathcal{H}$, composed of $|\mathcal{U}|$ sub-population groups where $\mathcal{H} = \bigcup_{u =1}^{|\mathcal{U}|} \mathcal{H}_u$, and a sub-population-specific preference $p^*_u$ as defined in \eqref{group_specific}, we define the diversity of sub-population group $\mathcal{H}_i$ with respect to other group $\mathcal{H}_j$ as%
     \begin{align}
         \texttt{Diversity}\ (i,j):= \text{TV}(p_i^*, p^*_j),
     \end{align}   %
     where $\text{TV}$ denotes the total variation distance between two preference distributions.
 \end{defi}
  
{By utilizing the definition of sub-population groups in $\mathcal{U}$, we can express the preference in \eqref{h1} as}
\begin{align}\label{new_definition}
       p^*(\mathbf y_1 \succ \mathbf y_2 \mid \mathbf x)  = &  \sum_{u=1}^{ |\mathcal{U}|}  \bigg[\sum_{h \in \mathcal{H}_{u}}\mathbb{I}_h (\mathbf z) \cdot q(h|u)\bigg]\cdot  \eta(u) 
      \nonumber \\
      =&\sum_{u=1}^{ |\mathcal{U}|} p_u^*(\bbz) \cdot \eta(u),
\end{align}
 where $\mathbf z:=(\mathbf y_1 \succ \mathbf y_2 \mid \mathbf x)$ is a shorthand notation and $q(\cdot)$ denotes the distribution over the humans $\mathcal{H}$. Here,  $p_u^*(\bbz_h) = \sum_{h \in \mathcal{H}_{u}}\mathbb{I}_h (\mathbf z) \cdot q(h|u)$ is the sub-population specific preference distribution (cf. \eqref{group_specific}) and  $\eta(\cdot)$ represents the marginal probability distribution of sub-population $\mathcal{H}_u$ and quantifies the probability of occurrence of sub-population $\mathcal{H}_u$ to provide feedback for pair $\textbf{z}$. We can think of $\eta(\cdot)$ as a weighting function that quantifies the relative importance of each sub-population (say $\mathcal{H}_u$) within the full population $\mathcal{H}$ reflecting their contributions to the aggregate preference distribution $p^*$. 
{Thus, from the expansion in \eqref{new_definition}, it is evident that the preference distribution under consideration is a weighted sum of sub-population specific preference distribution, weighted by $\eta(u)$}. 
We remark that distributions $q$ and $\eta$ are crucial to rigorously characterize the alignment performance of different approaches, which is not considered in the existing literature \citep{christian2020alignment,bai2022training}.

\subsection{Reward Mismatch Due to Diversity} \label{reward_2earning}
From equations \eqref{eq:bradley-terry} and \eqref{eq:reward_model}, we note that the existing RLHF approach focuses on learning the ground-truth single reward parameter $\phi^*$ to represent the preference distribution $p^*$ by minimizing the cross-entropy loss (cf. \eqref{eq:reward_model})  given by
\begin{align}\label{loss}
    \mathcal{L}_R(r_{\phi}, \mathcal{D})= &- \mathbb{E}_{(\mathbf x, \mathbf y_1, \mathbf y_2)\sim \mathcal{D}}\Big[p^*(\mathbf y_1\succ\mathbf y_2 \mid \mathbf x) \log p_{\phi}(\succ) \nonumber \\ 
    & + p^*(\mathbf y_1 \prec \mathbf y_2 \mid \mathbf x) \log p_{\phi}(\prec) \Big],
\end{align}
{The assumption of single ground-truth reward (corresponding to $p^*$) which is violated due to the existence of diverse sub-populations with separate preference distributions, as discussed in Section \ref{sub1}. This would lead to an implicit aggregation as shown in \eqref{new_definition} and the equivalent MLE objective in \eqref{loss} can be re-written as :}
\begin{align}\label{mle_upd}
    \mathcal{L}_R(r_{\phi}, \mathcal{D})&\!=\! - \mathbb{E}_{(\mathbf x, \mathbf y_1, \mathbf y_2)\sim \mathcal{D}}\Big[ \mathbb{E}_{u} [p^*_u(\mathbf y_1\! \succ\! \mathbf y_2\mid\! \mathbf x)] \log p_{\phi}(\succ) \nonumber\\ 
    & + \mathbb{E}_{u} [p^*_u(\mathbf y_1 \prec \mathbf y_2 \mid \mathbf x)] \log p_{\phi}(\prec) \Big].
\end{align}
{Now, expanding upon the cross-entropy objective, we note (see Lemma \ref{lemma_1} for details) that the objective in \eqref{mle_upd} essentially reduces to minimizing the Kullback-Leibler (KL) divergence $\mathsf{KL} (\sum_{u=1}^{ |\mathcal{U}|} \eta(u)  p_u^* (\bbz) || p_{\phi})$
and the objective is minimized at $p_{\phi^*} = \sum_{u=1}^{ |\mathcal{U}|} \eta(u)  p_u^*$. This implies that by minimizing the loss function in \eqref{mle_upd}, when we try to learn a single $\phi^*$ to recover $p^*$, an implicit averaging happens over the preferences of human subpopulation groups they belong to, which plays a critical role in the sub-optimality in reward learning summarized in Lemma \ref{theorem_1_impossibility0}.}
\begin{lem}[]\label{theorem_1_impossibility0}
\vspace{2mm}
    Let $\phi^*$ denotes the reward parameter, which models $p^*$ (cf. \ref{eq:bradley-terry}) and $\phi^*_u$ models the human sub-population group $\mathcal{H}_u\in\mathcal{U}$ specific $p_u^*$, it holds that 
    \begin{align}
      \underbrace{\|\phi^* - \phi_u^*\|}_{\textbf{Reward mismatch}}\geq  \frac{\epsilon (1-\eta(u))}{4D}, 
    \end{align}
    where $\epsilon:=\texttt{Diverity} (u, j) - \max_{k\neq u} \texttt{Diverity} (k, j) >0$, $D$ denotes the upper bound on the feature representation $\|\psi(\bby,\bbx)\|\leq D$ for all $(\bbx,\bby)$, and diversity as defined in Definition \ref{diver}.
\end{lem}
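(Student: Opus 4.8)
The plan is to route the entire argument through the total-variation distance between the aggregate preference $p^*$ and the group preference $p_u^*$. By the reward-learning analysis preceding the lemma we may take $p_{\phi^*} = \sum_{k} \eta(k)\, p_k^*$, and by the defining property of $\phi_u^*$ we have $p_{\phi_u^*} = p_u^*$. Hence it suffices to (i) lower bound $\text{TV}(p^*, p_u^*)$ by $\epsilon\,(1-\eta(u))$, and then (ii) convert this statistical gap into a gap between the parameters $\phi^*$ and $\phi_u^*$ via the Bradley--Terry parameterization.

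For step (i), I would first isolate group $u$ in the mixture, writing $p^* = \eta(u)\, p_u^* + (1-\eta(u))\, \bar p_{-u}$, where $\bar p_{-u} := \frac{1}{1-\eta(u)}\sum_{k\neq u}\eta(k)\, p_k^*$ is the renormalized mixture of the remaining groups. Since $p^* - p_u^* = (1-\eta(u))(\bar p_{-u} - p_u^*)$, the scalar factors out exactly and $\text{TV}(p^*, p_u^*) = (1-\eta(u))\,\text{TV}(\bar p_{-u}, p_u^*)$. It then remains to show $\text{TV}(\bar p_{-u}, p_u^*) \ge \epsilon$. Here I would invoke the reference group $j$ from the definition of $\epsilon$: the (reverse) triangle inequality gives $\text{TV}(\bar p_{-u}, p_u^*) \ge \text{TV}(p_u^*, p_j^*) - \text{TV}(\bar p_{-u}, p_j^*) = \text{Diversity}(u,j) - \text{TV}(\bar p_{-u}, p_j^*)$, and the joint convexity of $\text{TV}$ yields $\text{TV}(\bar p_{-u}, p_j^*) \le \frac{1}{1-\eta(u)}\sum_{k\neq u}\eta(k)\,\text{TV}(p_k^*, p_j^*) \le \max_{k\neq u}\text{Diversity}(k,j)$, since the weights $\eta(k)/(1-\eta(u))$ sum to one. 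Subtracting gives precisely $\text{TV}(\bar p_{-u}, p_u^*) \ge \epsilon$, hence $\text{TV}(p^*, p_u^*) \ge \epsilon\,(1-\eta(u))$.

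For step (ii), I would use that under the Bradley--Terry model with linear reward $r_\phi(\mathbf y,\mathbf x) = \phi^\top \psi(\mathbf y,\mathbf x)$, each preference probability equals $\sigma\!\big(\phi^\top(\psi(\mathbf y_1,\mathbf x) - \psi(\mathbf y_2,\mathbf x))\big)$. Since the preference given $(\mathbf x,\mathbf y_1,\mathbf y_2)$ is a Bernoulli over $\{\succ,\prec\}$, the distributional $\text{TV}$ reduces to the absolute difference of these probabilities, maximized over the argument. Evaluating at the pair $\mathbf z$ attaining this maximum and applying the Lipschitzness of the logistic $\sigma$ together with Cauchy--Schwarz and the feature bound $\|\psi(\mathbf y,\mathbf x)\| \le D$ (so the feature difference has norm at most $2D$), the map $\phi \mapsto p_\phi(\mathbf z)$ has Lipschitz constant no larger than $4D$, giving $\text{TV}(p^*, p_u^*) \le 4D\,\|\phi^* - \phi_u^*\|$; this constant is deliberately generous and only weakens the final lower bound. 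Rearranging and inserting the bound from step (i) produces $\|\phi^* - \phi_u^*\| \ge \frac{\epsilon\,(1-\eta(u))}{4D}$.

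The main obstacle is the lower bound $\text{TV}(\bar p_{-u}, p_u^*) \ge \epsilon$ inside step (i): the reverse triangle inequality by itself is too weak, and the argument closes only because the convexity of $\text{TV}$ allows one to replace the distance from the remaining-groups mixture to the reference $j$ by the worst single-group diversity $\max_{k\neq u}\text{Diversity}(k,j)$, which is exactly the quantity the definition of $\epsilon$ subtracts off. I would also verify that the hypothesis $\epsilon > 0$ is what renders the bound non-vacuous, and confirm that the reduction of the preference-distribution $\text{TV}$ to a pointwise Bernoulli difference is performed at the maximizing pair $\mathbf z$ so that the conversion in step (ii) is legitimate.
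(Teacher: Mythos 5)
Your proposal is correct and follows essentially the same route as the paper's proof: both convert the parameter gap to a total-variation gap via the $4D$-Lipschitzness of the Bradley--Terry preference probability (the paper's Lemma \ref{lemma_2}), and both lower bound $\text{TV}(p^*, p_u^*)$ by $\epsilon(1-\eta(u))$ using the triangle inequality together with convexity of $\text{TV}$ over the mixture $p^* = \sum_k \eta(k)p_k^*$. Your exact factorization $p^* - p_u^* = (1-\eta(u))(\bar p_{-u} - p_u^*)$ is a slightly cleaner piece of bookkeeping than the paper's term-by-term manipulation, but the underlying argument is identical.
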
 
\textbf{Proof Sketch.} Here we describe the proof sketch of Lemma \ref{theorem_1_impossibility0} with a detailed proof provided in Appendix \ref{theorem_1_impossibility_proof}. We begin with the definition of sub-optimality in the learned reward for a subpopulation group $u$ as $\Delta_u^r := \hat{\phi}_{\texttt{MLE}} - \phi_u^*$ where $\hat{\phi}_{\texttt{MLE}}$ which is the approximation to the true parameter $\phi^*$. However, we know in the limit of infinite data under appropriate regulatory conditions, $\hat{\phi}_{\texttt{MLE}}$ converges to $\phi^*$, and hence we focus on the sub-optimality gap due to diversity as $\|\phi_u^* - \phi^*\|$. 
 Using the Lipschitzness of the preference probability distribution under the Bradley-Tery preference model (derived in Lemma \ref{lemma_2} in Appendix) we lower-bound the sub-optimality gap and finally expanding upon the definition of $p^*$ as shown in \eqref{new_definition}, we get the final result. 
 
\textbf{Remark.} Lemma \ref{theorem_1_impossibility0} indicates that the current RLHF-based reward learning paradigm \citep{christian2020alignment,bai2022training,rafailov2023direct} will suffer sub-optimality due to diversity amongst the humans, which is highly likely in practice \citep{aroyo2023dices}. Lemma \ref{theorem_1_impossibility0} implies that the degree to which the learned reward parameter diverges from optimality for a given subgroup is influenced by two key factors: the distinctiveness of that subgroup’s preferences compared to all the other subgroups, and the relative weight assigned to the subgroup in the overall preference model. 

\subsection{An Impossibility Results of Alignment}\label{sub3}
To mathematically characterize the impossibility of aligning the language model with diverse sub-population groups,  let us reconsider the RL fine-tuning optimization problem, which is given by (step 3 in RLHF) 
\begin{align}\label{eq:RL2}
\max_{{\pi}}  F_{r_{\phi}}(\pi),
\end{align}
\begin{figure}[t]
    \centering
    \includegraphics[width=\columnwidth]{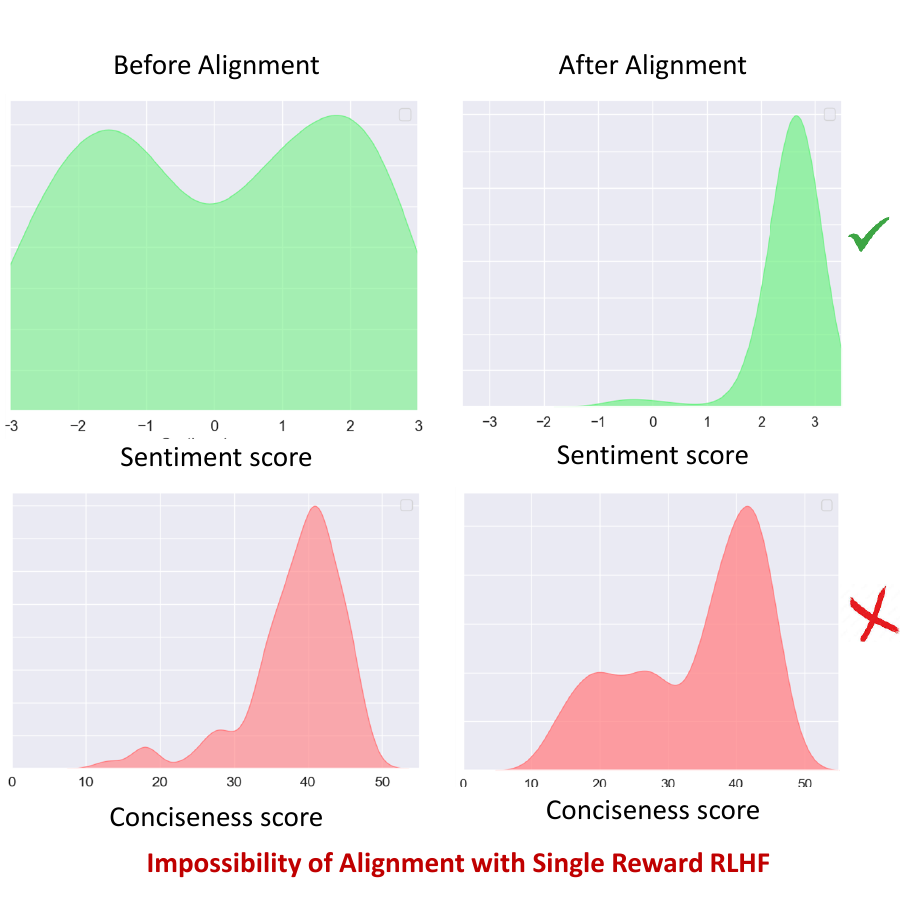}\vspace{-3mm}
    \caption{\textbf{(Empirical Evidence of Impossibility).} This figure validates our theoretical results in Theorem \ref{theorem_2_impossibility} and provides empirical evidence of the impossibility of alignment in single reward RLHF on preference dataset presented in Figure \ref{fig:diversity}. Here, the task is to align the LLM to generate positive sentiment responses which are concise. We note that the aligned language model can generate highly positive sentiment sentences but completely ignores the requirement of conciseness. This is happening because the humans who prefer conciseness are in the minority as compared to humans who prefer a positive sentiment score as described in Figure \ref{fig:diversity}. }
    \label{fig:impossibility}
\end{figure}
where we define $F_{r_{\phi}}(\pi):=\mathbb{E}_{\mathbf x\sim \mathcal{P}} \Big[\mathbb{E}_{ \mathbf{y}\sim \pi(\cdot~|~\mathbf{x})}\bigl[r_{\phi}(\mathbf y, \mathbf x)\bigr] - \beta\mathbb{D}_{\textrm{KL}}\bigl[\pi(\cdot~|~\mathbf{x})|| \pi_{\text{ref}}(\cdot~|~\mathbf{x})\bigr]\Big]$. Let us define  $\pi^*_{\text{RLHF}}:=\argmax_{{\pi}}  F_{r_{\phi^*}}(\pi)$
    where $\pi^*_{\text{RLHF}}$ is the optimal aligned policy with single reward RLHF. On the other hand, we define a human sub-population specific optimal policy as $ \pi^*_{\text{u}}:=\argmax_{{\pi}}  F_{r_{\phi^*_u}}(\pi)\label{pi_u}$, 
 where $\pi^*_{\text{u}}$ is the optimal aligned policy with individual subpopulation group $\mathcal{H}_u$. We define the alignment gap of RLHF model $  \pi^*_{\text{RLHF}}$ to a specific  user group $\mathcal{H}_u$  by
\begin{align}\label{align_gap}
    \text{Align-Gap}(\pi_{\text{RLHF}}): =F_{r_{\phi_u^*}}(\pi_u^*)-F_{r_{\phi_u^*}}(\pi_{\text{RLHF}}).
\end{align}
We note that the alignment gap defined in \eqref{align_gap} measures the discrepancy between the reward returns by the single reward RLHF model $\pi_{\text{RLHF}}$ and the optimal model $\pi_u^*$ tailored for $\mathcal{H}_u$ subpopulation evaluated under true reward function $r_u^*$. 
Next,  we present our impossibility result in Theorem \ref{theorem_2_impossibility}
\begin{thm}[An Impossibility Result]\label{theorem_2_impossibility}
\vspace{2mm}
    Let $\phi^*$ denotes the reward parameter, which models $p^*$ (cf. \ref{eq:bradley-terry}),  $\phi^*_u$ denotes the human sub-population-group $\mathcal{H}_u\in\mathcal{U}$ specific reward function to model $p_u^*$, and alignment gap is as defined in \eqref{align_gap}.
    Then, it holds that 
    \begin{align}
    \text{Align-Gap}  \geq & \frac{ \lambda_{\psi} }{64 \beta^2L_{\pi}}\cdot \frac{\epsilon (1 -\eta(u))}{D^2},
    \end{align}
    where $\epsilon:=\texttt{Diversity} (u, j) - \max_{k\neq u} \texttt{Diversity} (k, j) >0$,  $\eta$ denotes the representation for the human sub-population group $u$,   $D$ denotes the upper bound on the feature representation $\|\psi(\bby,\bbx)\|\leq D$ for all $(\bbx,\bby)$, $\lambda_{\psi}$ denotes the minimum eigenvalue of the feature matrix, $\beta$ is the regularization parameter of RLHF framework and diversity as defined in Definition \ref{diver}.
\end{thm}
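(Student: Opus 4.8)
The plan is to reduce the alignment gap in \eqref{align_gap} to a divergence between the two optimal policies $\pi^*_{\text{RLHF}}$ and $\pi^*_u$, and then to translate that policy-space discrepancy back into the reward-parameter mismatch $\|\phi^* - \phi_u^*\|$ that Lemma \ref{theorem_1_impossibility0} already controls in terms of the diversity gap of Definition \ref{diver}. First I would exploit the closed form of the KL-regularized maximizer in \eqref{eq:RL2}: for any reward $r_\phi$ the solution of $\max_\pi F_{r_\phi}(\pi)$ is the Gibbs policy $\pi^*_{r_\phi}(\mathbf{y}\mid\mathbf{x}) \propto \piref(\mathbf{y}\mid\mathbf{x})\exp(r_\phi(\mathbf{y},\mathbf{x})/\beta)$, so both $\pi^*_{\text{RLHF}}$ and $\pi^*_u$ are exponential tilts of the common reference $\piref$ by $r_{\phi^*}$ and $r_{\phi_u^*}$ respectively. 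Substituting this Gibbs form into $F_{r_{\phi_u^*}}$ gives the exact identity $\text{Align-Gap} = \beta\, \mathbb{E}_{\mathbf{x}\sim\mathcal{P}}[\mathbb{D}_{\textrm{KL}}[\pi^*_{\text{RLHF}}(\cdot\mid\mathbf{x})\,\|\,\pi^*_u(\cdot\mid\mathbf{x})]]$, since $\pi^*_u$ maximizes $F_{r_{\phi_u^*}}$ and the suboptimality of any policy under a KL-regularized objective equals $\beta$ times its KL to the optimum. This turns the theorem into a problem of lower-bounding a divergence between two Gibbs policies.

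Next I would lower bound that discrepancy by the reward-parameter gap. Writing the reward linearly as $r_\phi(\mathbf{y},\mathbf{x}) = \langle\phi,\psi(\mathbf{y},\mathbf{x})\rangle$, I would use gradient domination together with smoothness of the objective: since $F_{r_{\phi_u^*}}$ is concave and $L_\pi$-smooth in $\pi$ with maximizer $\pi^*_u$, one has $\text{Align-Gap} \geq \tfrac{1}{2L_\pi}\|\nabla F_{r_{\phi_u^*}}(\pi^*_{\text{RLHF}})\|^2$. The key observation is that $\nabla F_{r_{\phi^*}}(\pi^*_{\text{RLHF}}) = 0$, so the gradient at $\pi^*_{\text{RLHF}}$ reduces to the pointwise reward difference $r_{\phi_u^*}(\mathbf{y},\mathbf{x}) - r_{\phi^*}(\mathbf{y},\mathbf{x}) = \langle\phi_u^* - \phi^*,\psi(\mathbf{y},\mathbf{x})\rangle$ (weighted by $\mathcal{P}$ and the Gibbs tilt, which is where the $\beta$ factors enter). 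Invoking the minimum-eigenvalue condition on the feature second-moment matrix, $\mathbb{E}[\psi\psi^\top]\succeq\lambda_\psi I$, then converts the scalar reward differences back to the parameter norm, $\|\nabla F_{r_{\phi_u^*}}(\pi^*_{\text{RLHF}})\|^2 \geq c\,\lambda_\psi\beta^{-2}\|\phi^* - \phi_u^*\|^2$, with the feature bound $D$ entering through the normalization of $\psi$. (Equivalently, one can evaluate the KL from the first step directly as the Bregman divergence of the log-partition function $A_{\mathbf{x}}(\phi)=\log\sum_{\mathbf{y}}\piref(\mathbf{y}\mid\mathbf{x})\exp(\langle\phi,\psi\rangle/\beta)$, whose Hessian is $\beta^{-2}\mathrm{Cov}_{\pi_\xi}[\psi]$, and bound it below by $\lambda_\psi$ along the segment between $\phi^*$ and $\phi_u^*$.)

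Finally I would substitute the reward mismatch bound from Lemma \ref{theorem_1_impossibility0}, namely $\|\phi^* - \phi_u^*\| \geq \epsilon(1-\eta(u))/(4D)$, into the previous display and collect the absolute constants to obtain the claimed lower bound on $\text{Align-Gap}$ in terms of $\lambda_\psi$, $\beta$, $L_\pi$, $D$, and the diversity gap $\epsilon(1-\eta(u))$.

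The main obstacle is the middle step. The gradient-domination (or Bregman) bound only certifies a lower bound on $\text{Align-Gap}$ if the feature dispersion under the tilted Gibbs policy does not collapse: when $\beta$ is small or the reward tilt concentrates $\pi^*_{\text{RLHF}}$ on a few responses, the covariance $\mathrm{Cov}_{\pi_\xi}[\psi]$ can be arbitrarily small, so the crux is certifying the uniform lower bound $\mathrm{Cov}_{\pi_\xi}[\psi]\succeq\lambda_\psi I$ along the entire interpolation path and simultaneously controlling the normalizing constants $\log Z(\mathbf{x})$ of the two Gibbs distributions. This is precisely where $\lambda_\psi$, $\beta$, and $L_\pi$ must be tracked carefully, whereas the reduction in the first step and the substitution of Lemma \ref{theorem_1_impossibility0} in the last step are essentially bookkeeping.
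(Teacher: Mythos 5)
Your proposal shares the paper's overall skeleton: both arguments pass from the alignment gap to a policy-space discrepancy, use the Gibbs/DPO closed form $\pi_r(\mathbf{y}\mid\mathbf{x}) \propto \piref(\mathbf{y}\mid\mathbf{x})\exp(r(\mathbf{y},\mathbf{x})/\beta)$ to convert that discrepancy into a reward difference, invoke the minimum eigenvalue $\lambda_\psi$ to turn the reward difference into the parameter gap $\|\phi^*-\phi_u^*\|$, and finish by substituting Lemma \ref{theorem_1_impossibility0}. Your first step is in fact cleaner than the paper's: the identity $\text{Align-Gap} = \beta\,\mathbb{E}_{\mathbf{x}}\bigl[\mathbb{D}_{\textrm{KL}}[\pi^*_{\text{RLHF}}(\cdot\mid\mathbf{x})\,\|\,\pi^*_u(\cdot\mid\mathbf{x})]\bigr]$ is exact and correct, whereas the paper reaches a comparable inequality by asserting strong concavity of $F_{r_{\phi_u^*}}$ in $\pi$ (with $\mu=1$) and then Lipschitzness of $\log\pi$ with constant $L_\pi = 1/c$ under the assumption $\pi(\mathbf{y}\mid\mathbf{x})\geq c$.

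The genuine gap is in your middle step, which you flag but do not close, and which the paper's route never encounters. In your route the KL between two Gibbs policies is a Bregman divergence of the log-partition function, whose Hessian is $\beta^{-2}\mathrm{Cov}_{\pi_\xi}[\psi]$, a \emph{policy-dependent} covariance; but the theorem's $\lambda_\psi$ is the minimum eigenvalue of the \emph{fixed} feature Gram matrix $\Psi^T\Psi$, which does not lower bound the covariance of features under a tilted policy. Worse, there is a structural obstruction, not just a missing estimate: your exact identity shows that the alignment gap is invariant to adding any per-prompt constant $c(\mathbf{x})$ to the reward, while $\|\phi^*-\phi_u^*\|$ is not. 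If some direction $v$ satisfies $\langle v,\psi(\mathbf{y},\mathbf{x})\rangle = c(\mathbf{x})$ for all $\mathbf{y}$ (perfectly compatible with $\Psi^T\Psi\succ 0$, e.g.\ a prompt-only feature coordinate), then $\phi^*-\phi_u^* = v$ gives $\text{Align-Gap}=0$ with $\|\phi^*-\phi_u^*\|>0$, so the inequality $\mathbb{E}_{\mathbf{x}}\,\mathbb{D}_{\textrm{KL}} \geq c\,\lambda_\psi\beta^{-2}\|\phi^*-\phi_u^*\|^2$ that your plan requires is false without extra assumptions such as per-prompt centered features. The paper sidesteps this entirely: it bounds $\|\log\pi^*_{\text{RLHF}}-\log\pi^*_u\|$ directly from strong concavity plus the $L_\pi$-Lipschitz bound, replaces the log-policy difference by $\frac{1}{\beta}(r_{\phi^*}-r_{\phi_u^*})$ (silently dropping the $\log Z(\mathbf{x})$ terms), and then contracts against the fixed matrix $\Psi^T\Psi$ to get $\frac{\lambda_\psi}{4L_\pi\beta^2}\|\phi^*-\phi_u^*\|^2$ before applying Lemma \ref{theorem_1_impossibility0}. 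Whatever one thinks of the rigor of that dropped-partition-function step, it is the step that produces the theorem's constants, and your proposal has no working substitute for it.
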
 
A detailed proof of Theorem \ref{theorem_2_impossibility} is provided in Appendix \ref{proof_theoem_1}.  We briefly describe the proof sketch of Theorem \ref{theorem_2_impossibility} as follows.

\textbf{Proof Sketch.} 
We begin by considering the KL-regularized alignment objective (cf. \eqref{eq:RL}). Utilizing the strong concavity of the objective under the KL regularization and the analytical mapping from reward functions to optimal policies (as used in DPO \citep{rafailov2023direct}), we first derive a lower bound on the alignment gap as $\text{Align-Gap}(\pi_{\text{RLHF}})  \geq  \frac{ 1}{2 L_{\pi} \beta^2} \|r_{\phi^*}-r_{\phi_u^*}\|^2$. Under the linear parametrization in reward and utilizing the boundedness on the representation space, we can lower-bound the alignment gap with the reward sub-optimality and eventually the diversity coefficient. 

\textbf{Remark.} Theorem \ref{theorem_2_impossibility} shows that high subpopulation diversity inevitably leads to a greater alignment gap. {Here, $\epsilon$ depends on the diversity among user groups, highlighting that when the diversity between a specific user group $u$ and others is significantly higher compared to inter-group diversity, it suggests that $u$ is a minority. Consequently, aligning to this particular user-group with single-reward RLHF becomes particularly challenging. Moreover, as the representation of the user-group reduces i.e $\eta(u) \to 0$, the alignment gap further increases making it harder to align to this user-group.} In summary, if a subgroup exhibits distinctive preferences or constitutes a minority with a smaller representation, the resulting model from single reward RLHF setting cannot accurately reflect the sub-population's specific preferences. We provide empirical evidence of the impossibility of alignment in Figure \ref{fig:enter-label}.

\section{MaxMin-RLHF: One Possibility}

From the statement of Theorem \ref{theorem_2_impossibility}, it is clear that it is not possible to align diverse human preferences with a single reward RLHF. We start by noting that even if we can bypass the sub-optimality in reward learning (cf. Lemma \ref{theorem_1_impossibility0}) by learning multiple reward functions $\hat{\phi}_u$ for all $\mathcal{H}_u$, it doesn't resolve the eventually aim of language model alignment. This is because our goal is to develop a single model $\pi^*$ that honors diverse user preferences without demonstrating bias towards specific groups such as minorities. 
To achieve that, we take motivation from the Egalitarian rule in social choice theory \cite{Sen2017}, which states that society should focus on maximizing the minimum utility of all individuals. 
Hence, we write our proposed alignment objective which maximizes the social utility as 
%
\begin{align}\label{gen_utility_rlhf_two}
    \pi^*_{\mathcal{F}} \in \arg \max_{\pi} \min_{u \in \mathcal{U}}F_{r_{\phi_u^*}}(\pi) -\beta\mathbb{D}_{\textrm{KL}}\big[\pi|| \pi_{\text{ref}}\big], 
\end{align}
where, $F_{r_{\phi_u^*}}(\pi):=\mathbb{E}_{\mathbf x\sim \mathcal{P},\mathbf{y}\sim \pi(\cdot~|~\mathbf{x}) }\big[{r_{\phi_u^*}}(\mathbf y, \mathbf x)\big]$ (cf. \eqref{eq:RL}) represents the alignment objective for the $u^{\texttt{th}}$ sub-population or group among set of humans.

\begin{algorithm}[t]
\caption{MaxMin RLHF}
\label{alg:cdm2}
\begin{algorithmic}[1]
\STATE {\bf Input}: Preference dataset $\mathcal{D}$, initial reward parametrization for each subpopulation $u$ as $r_{\phi_0}^u$, initial policy parameter $\pi_0$.
\STATE {\bf Reward Learning with EM}: Utilize Algorithm \ref{alg:cdm} for learning rewards with EM to learn $r_{\phi}^u$ for all user subpopulation $u$ 
\STATE \textbf{Max-Min Policy Iteration}:
\FOR{$t = 0$ to $T-1$}
    \STATE {\bf Choosing Minimum Utility Subpopulation}:
    \STATE \( u_{\text{min}} \gets \arg\min_{\mathcal{H}_u \in \mathcal{U}} F_{r_{\phi}^u}(\pi_t) \)
    \STATE {\bf Perform the PPO Update}:
    \STATE Update policy $\pi$ towards maximizing the objective:
    \STATE \( \pi_{i+1} \gets \text{PPO-update}(F_{r_{\phi_u^*}}(\pi_t) -\beta\mathbb{D}_{\textrm{KL}}\big[\pi_t|| \pi_{\text{ref}}\big]) \)
\ENDFOR
\STATE {\bf Output}: Policy $\pi_T$ aligned with socially fair preference dataset
\end{algorithmic}
\end{algorithm}

\textbf{MaxMin RLHF.}  If we have access to individual human sub-population rewards, we can go directly to solve the optimization problem in \eqref{gen_utility_rlhf_two} with the algorithm summarized in Algorithm \ref{alg:cdm2}. But often, in practice, they are hardly available. To address this challenge,
we consider an expectation-maximization algorithm to learn a mixture of reward models summarized in Algorithm \ref{alg:cdm} which learns the $r_{\phi_u}$'s and the $|\mathcal{U}|$ clusters.

We summarize the EM algorithm for reward learning in Algorithm \ref{alg:cdm}.

\begin{algorithm}[t]
\caption{Learning Rewards with EM Algorithm}
\label{alg:cdm}
\begin{algorithmic}[1]
\STATE {\bf Input}: Preference data $\mathcal{D}$, $|\mathcal{U}|$ clusters of users among all humans in $\mathcal{H} = \bigcup_{u =1}^{|\mathcal{U}|} \mathcal{H}_u$, pretrained $\{r_{\phi_u}\}_{u=1}^{|\mathcal{U}|}$, loss function $\operatorname{loss}$, convergence criteria\\

\WHILE{not reach the convergence criteria}
    \FOR{$h \in \mathcal{H}$}
        \STATE {\bf E-step (hard cluster assignment)}: 
        assign $h$ to the $u$-th cluster s.t.
        $$
            u =\ arg\max_{u \in {1,\cdots,|\mathcal{U}|}} \prod_{ (\mathbf{x}, \mathbf{y_1}, \mathbf{y_2}, h) \in \mathcal{D}} w(\phi_u, \mathbf{x}, \mathbf{y_1}, \mathbf{y_2})
        $$
        where $w(\cdot)=\frac{\exp\left(r_{\phi_u}(\mathbf{y_1}, \mathbf{x})\right)}{\exp\left(r_{\phi_u}(\mathbf{y_1}, \mathbf{x})\right) + \exp\left(r_{\phi_u}(\mathbf{y_2}, \mathbf{x})\right)}$
    \ENDFOR
    
    \STATE {\bf M-step}: Update each $\phi_u, u = 1,\cdots,|\mathcal{U}|$ by minimizing the negative log-likelihood loss (\ref{eq:reward_model}) on the assigned users' data
    
\ENDWHILE   
\end{algorithmic}
\end{algorithm}

\section{Experimental Results}\label{experiments}
In this section, we present a comprehensive empirical evaluation of the alignment impossibilities and our proposed solutions for language models, structured into two distinct subsections: \emph{Small Scale} experiments (Sec. \ref{small_Scale}) for initial proof of concept, and \emph{Large Scale} experiments (Sec. \ref{large_Scale}) for broader validation. We first demonstrate the practical challenges of alignment (cf. Theorem \ref{theorem_2_impossibility}), followed by showcasing the efficacy of our MaxMin-RLHF strategy. This approach illustrates that, with a focus on social welfare objectives, alignment across diverse human preferences is attainable.

\subsection{Small Scale Experiments (with GPT-2): Sentiment and Conciseness Alignment}\label{small_Scale}
\begin{figure}[t]
    \hspace{5mm}\includegraphics[width=0.9\columnwidth]{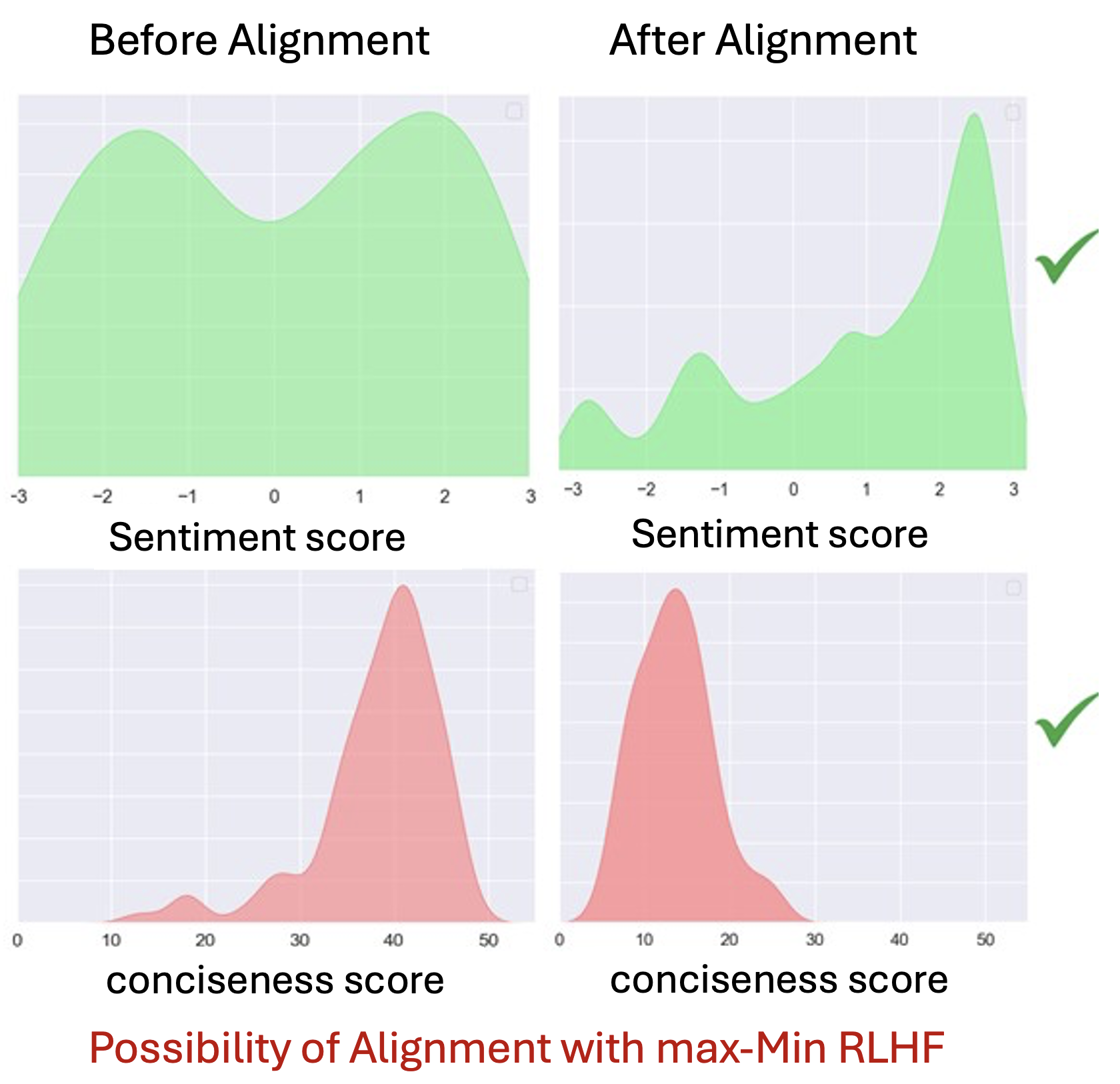}
\vspace{-1mm}
    \caption{\textbf{(Alignment with MaxMin RLHF).} This figure shows the performance of our proposed MaxMin RLHF algorithm for the preference dataset described in Figure \ref{fig:diversity}. The task is to align a language model to generate positive sentiment responses that are concise (of shorter token length) in nature. We note that MaxMin-RLHF aligned language model can generate highly positive sentiment sentences and satisfy the conciseness criteria. This shows alignment with both the majority and minority preferences.}
    \label{fig:possi}
\end{figure}

\textbf{Dataset.} For the experiment in this section on controlled sentiment generation, we categorized the humans into two groups: \emph{majority} (Group 1) and \emph{minority} (Group 2). In these sub-groups,  Group 1 prefers responses with positive sentiment, and Group 2 prefers brevity (conciseness) in responses. 
 We use the IMDb dataset as a basis for our inputs \citep{maas-EtAl:2011:ACL-HLT2011}, the goal for the optimal policy is to produce responses $\textbf{y}$ that exhibit positive sentiment (catering to Group 1) while remaining concise (catering to Group 2). We generated two sets of preference pairs for a controlled evaluation for each user group. For Group 1, we utilized a pre-trained sentiment classifier to ensure $p(\text{positive}\mid \textbf{x},\textbf{y}_1)>p(\text{positive}\mid \textbf{x},\textbf{y}_2)$ and similarly for Group 2 we preferred shorter responses over longer ones. To illustrate the majority and minority group dynamics, we control the proportion of the user groups in the preference data (Group 1: 80\% and Group 2 - 20\%).  For the experiments in this subsection, we use GPT-2 \citep{Radford2019LanguageMA} as the base model.

\textbf{Impossibility Results.} To demonstrate our impossibility results as stated in Theorem \ref{theorem_2_impossibility}, we perform the three steps of RLHF (described in \citep{christian2020alignment, ouyang2022training}) as prevalent currently with a single utility reward function on the combined preference dataset. For SFT, we fine-tune GPT-2 until convergence on reviews from the train split of the IMDB dataset and use this GPT-2 backbone for both the reward model and PPO training. The generations are evaluated against the ground truth rewards $r_1^*$ for positive sentiment (majority group) and $r_2^*$ for conciseness (minority group). It is evident from Figure \ref{fig:impossibility} that the generated responses are significantly biased toward the majority user group's preference who preference positive sentiment (note high sentiment score (green curve, high score is better) after alignment) while the preferences (concise responses) of the minority user group were neglected (note high conciseness score (red curve, lower score is better) after alignment), resulting in more verbose generations than desired. 

\textbf{Proposed MaxMin RLHF.} Our proposed algorithm can efficiently align to both group preferences as shown in Figure \ref{fig:possi} thereby generating responses that are of positive sentiment and concise and thus cater to both the majority and minority user groups mitigating the social disparity. We further collectively present the average performance of MaxMin RLHF with the single reward RLHF and baseline model in Figure \ref{fig:enter-label}.
\begin{figure}[h]
    \centering
    \includegraphics[width=\columnwidth]{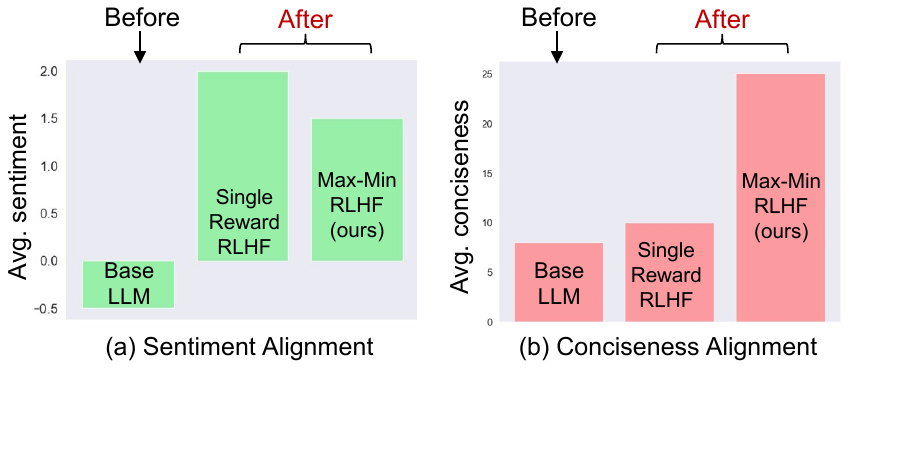}\vspace{-3mm}
    \caption{This figure shows the average performance in terms of sentiments of the generated output and the conciseness alignment.We note that MaxMin RLHF is able to better cater to both the alignment criteria as compared to single reward RLHF as expected.}
    \label{fig:enter-label}
\end{figure}
\subsection{Large Scale Experments (with Tulu2-7B)}\label{large_Scale}

\textbf{Datasets and Experimental Setup.} We use the same dataset as \citet{jang2023personalized} and $10$k data points from GPT4-Alpaca \citep{peng2023instruction} are used as the instruction dataset to generate rollouts, collect pairwise feedback data, and PPO training. We utilize GPT-4 to simulate human annotators with preference prompts described in Table \ref{dataset_summary} in Appendix \ref{appendix_experiments}. We divide the datasets into groups of human users. Each group has $40$ users, which are split into $30$ users in training data and $10$ users in testing data. For the experiments in this subsection, we use Tulu2-7B \citep{ivison2023camels} as the base model. For each dataset, P1, P2, and P3, we mix the training user groups to build the simulation dataset. We have 60 users in training data which are mixed from two different groups with diverse preferences. The original distribution is that users are evenly distributed in two clusters. Then, we use the EM algorithm to train $|\mathcal{U}|=2$ reward models until we converge. Update $\phi_u, u = 1,\cdots,|\mathcal{U}|$ by minimizing the negative log-likelihood loss (\ref{eq:reward_model}). Then, trained model is used to assign clusters to users in testing data.

\subsubsection{Main Results}

\textbf{Impossibility of Single Reward Model.} When the user groups are biased (divided into majority and minority groups based on the preference dataset), the single reward model fails to capture the preferences of minority user groups. We test on preference dataset P1A/P1B representing two user groups and adjust the ratio of the number of users from group P1A and group P1B. Table \ref{tab:table2} summarizes the accuracy for the majority group and minority group, as well as the accuracy on the total data.  Here, low accuracy means that the alignment with the minority user group will be poor after the PPO step since the reward model itself is not accurate. 
\begin{table}[t]
    \centering
     \resizebox{0.7\columnwidth}{!}{\begin{tabular}{cccc}
    \toprule
        Ratio & Total& Majority & Minority\\
    \midrule
        1:1 & 0.686 & 0.668 & 0.704\\
        2:1 & 0.608 & 0.728 & 0.488\\
        6:1 & 0.588 & 0.724 & 0.452\\
        10:1 & 0.568 & 0.716 & 0.42\\
        \hline
    \end{tabular}}
    \caption{This table presents the test accuracy of the single reward model training on the preference dataset and shows its failure to align with the minority. The first column denotes the user group ratio in the dataset, the second column shows the total accuracy, the third column shows the accuracy of the majority group, and the fourth column shows the accuracy of the minority group. }
    \label{tab:table2}
\end{table}
%

\textbf{Reward Learning with EM (Algorithm \ref{alg:cdm}).} Following the procedures in the experiment setup, we get similar and good results on all three datasets, as shown in Figure \ref{fig:result1}. From the results in Figure \ref{fig:result1}, we note that after the fourth iteration, all users are clustered correctly, meaning the mixture preference model successfully converges we successfully learn diverse groups of users with diverse preferences.

\textbf{MaxMin RLHF Alignment.} We further test the performance of our MaxMin-RLHF alignment method and compare it with the single reward \begin{table}[h]
    \centering
    \resizebox{0.7\columnwidth}{!}{\begin{tabular}{cccc}
    \toprule
        Method & P3A & P3B & Average\\
    \midrule
        MaxMin & 57.78 & 55.56 & 56.67\\
        1:1 & 55.85 & 52.62 & 54.24\\
        2:1 & 55.56 & 48.89 & 52.23\\
        6:1 & 58.06 & 46.67 & 52.37\\
        10:1 & 56.00 & 45.00 & 50.50\\
        \hline
    \end{tabular}}
    \caption{Pairwise win rate (\%) on P3 dataset using GPT-4.}
    \label{tab:tableP3}
\end{table}RLHF models trained on biased datasets. Our baselines include ratios of  1, 2, 6, and 10, the same setting as discussed for Table \ref{tab:table2}. Following \citet{jang2023personalized}, we use the same $50$ instances from Koala evaluation\citep{geng2023koala} and test the model's ability to generate answers in different groups of users' preferences. We run pairwise evaluations by GPT-4 using AlpacaFarm codebase\citep{dubois2023alpacafarm} and use the win rate to the base Tulu2-7B model as the metric. Our results in Table \ref{tab:tableP3} and Table \ref{tab:tableP1} show that MaxMin alignment keeps a high win rate while the models trained by PPO with a single reward model on biased datasets will have a relatively poor performance on the minority data representing minority user groups.
\begin{table}
    \centering
     \resizebox{0.48\columnwidth}{!}{\begin{tabular}{cccc}
    \toprule
        Method & P1A & P1B \\
    \midrule
        MaxMin & 57.50 & 60.00 \\
        1:1 & 56.00 & 51.97 \\
        2:1 & 57.78 & 44.00 \\
        6:1 & 54.81 & 48.00 \\
        10:1 & 55.11 & 45.08 \\
        \hline
    \end{tabular}}
    \hspace{2mm}\resizebox{0.48\columnwidth}{!}{ \begin{tabular}{cccc}
    \toprule
        Method & P2A & P2B \\
    \midrule
        MaxMin & 54.50 & 56.00 \\
        1:1 & 53.73 & 54.00 \\
        2:1 & 55.55 & 51.72 \\
        6:1 & 52.14 & 49.40 \\
        10:1 & 53.96 & 45.98 \\
        \hline
    \end{tabular}}
    \caption{Pairwise winrate (\%) on P1-P2 using GPT-4.}
    \label{tab:tableP1}
\end{table}

\section{Conclusions}
In this work, we critically examine the limitations of the single-reward RLHF framework, particularly its insufficiency in addressing the diversity of human preferences, leading to an impossibility result for alignment with diverse preferences. To achieve a socially fair alignment in diverse human preference settings, we introduce a novel approach called MaxMin-RLHF, which learns a max-min policy over a distribution of reward functions to achieve a more equitable model alignment. Our experiments demonstrate the effectiveness of MaxMin-RLHF in producing socially fairer outcomes, highlighting the need for more inclusive strategies in RLHF methodologies.

\section*{Impact Statement}
The primary objective of this work is to highlight the limitations of the existing alignment techniques in representing diverse opinions and preferences. Our research is one of the first to formally highlight the above limitation with mathematical and empirical demonstrations. Finally, our research demonstrates a first step towards equitable alignment with diverse preferences. The findings of our work would encourage and foster further research in the domain of alignment under diversity, ensuring the current AI models is not biased towards specific minority groups.

\section*{Acknowledgements}
Chakraborty and Huang are supported by DARPA Transfer from Imprecise and Abstract Models to Autonomous Technologies (TIAMAT) 80321, National Science Foundation NSF-IIS-2147276 FAI, DOD-ONR-Office of Naval Research under award number N00014-22-1-2335, DOD-AFOSR-Air Force Office of Scientific Research under award number FA9550-23-1-0048, DOD-DARPA-Defense Advanced Research Projects Agency Guaranteeing AI Robustness against Deception (GARD) HR00112020007, Adobe, Capital One and JP Morgan faculty fellowships. Manocha and Bedi are supported by Army Cooperative Agreement W911NF2120076. Mengdi Wang acknowledges the support by NSF IIS-2107304, NSF CPS-2312093, ONR 1006977 and Genmab. {We also thank Rui Yang and Han Zhao for pointing out a bug in the proof of Lemma 1 in the previous version.}

\bibliography{ref}
\bibliographystyle{icml2024}

\clearpage
\onecolumn
\tableofcontents
\clearpage

\appendix
\section{Notations} 

We define the various notations in this table first.  

\begin{center}
	\resizebox{!}{0.15\textwidth}{\begin{tabular}{ |c|c| } 
		\hline
		Notations & Description \\ 
		\hline
				$\textbf{x}$ & prompt    \\ 
				
		\hline
				$\mathcal{X}$ & set of prompts    \\ 
		\hline
				$\textbf{y}$ & output text generated by the LLM    \\ 
		\hline
				$\pi_{\text{ref}}$ & direct supervised fine-tuning model, takes $\textbf{x}$ as input and generates $\textbf{y}$ as output   \\ 
		\hline
				$(\textbf{y}_1,\textbf{y}_2)$ & output pair generated by LLM   \\ 
		\hline
				$h$ & human  \\ 
		\hline
				$\mathcal{D}$ & dataset which has the data of the form $(\textbf{x},\textbf{y}_1,\textbf{y}_2)$    \\ 
						\hline
				$\phi$ & reward model parameter    \\ 
    		\hline
				$\theta$ & language model parameter    \\ 
						\hline	
				$\mathcal{H}$ & set of human population  \\ 
		\hline
	\end{tabular}}
\end{center}

\section{A detailed Context of Related Works}\label{related_works_appendix}

\textbf{Reinforcement Learning from Human Feedback.} RL methods, such as policy gradient, applied to train language models for long-form generation \cite{cho2018coherent}. 
The current RLHF approaches \citep{rlhf_p1, rlhf_p2, zhu2023principled} involve training a reward model based on human preference feedback and then fine-tuning the language model using proximal policy optimization (PPO) \citep{schulman2017proximal}. The PPO algorithm helps to learn a model that produces responses that maximize the reward \citep{ouyang2022training, bai2022training}. Besides PPO, DPO (Direct Preference Optimization, \citet{rafailov2023direct}) directly trains the large language model using human preferences without training the reward model. A self-play-based approach such as SPIN \citep{chen2024self} is similar to DPO but has an iterative framework. However, most of the existing alignment approaches only consider the average preference by human annotators and ignore the inherent diversity among human preferences \citep{casper2023open, rlhf_survey2}. 
A number of theoretical studies have analyzed the efficiency and benefits for reinforcement learning using preference data \citep{ji2023provable,zhang2023unified,li2023reinforcement}. \citep{chakraborty2024parl} proposed a bilevel reinforcement learning framework for policy alignment.
Recently \citep{santurkar2023whose} created a dataset for evaluating the alignment of language models with $60$ US demographic groups over a wide range of topics and found substantial misalignment between a  selanguage models and those groups. It emphasizes the criticality of considering diversity while performing alignment. 

\textbf{Diversity in Human Preferences.} Here, we briefly review the literature highlighting the reasons for diversity in the context of LLMs. Diverse human preferences stem significantly from various factors related to social and cultural backgrounds \citep{aroyo2023dices, aroyo2023reasonable, denton2021ground}.  The key factors contributing to this diversity include (i) \textit{socio-demographic backgrounds}, including race, ethnicity, age, and gender shape preferences. Gender differences, for example, influence sensitivity to online content, with women facing more online harassment \cite{vogels2021state}. {(ii) \textit{Personal bias and context subjectivity}}, which affects the human preferences for controversial topics in interpreting language and divisive themes \citep{diversity3, diversity2}). {(iii) \textit{Imperfect preferences},} which arises due to variations in expertise, training, or quality control leading to diverse preferences, with certain content inaccurately considered offensive by some groups \citep{diversity2}. {(iii) \textit{Linguistic ambiguity \& missing context},} could lead to diversity because of words or phrases with multiple possible interpretations and without clear context \citep{diversity2, diversity3, diversity4}. These factors collectively underscore the complexity of aligning LLM outputs with the diverse preferences of human users, demonstrating the importance of recognizing and addressing the multifaceted nature of user feedback.

\section{Preliminary Results}
 We present the following preliminary results in the form of Lemma \ref{lemma_2} and Lemma \ref{lemma_1}. 

\begin{lem}\label{lemma_2}
\vspace{2mm}
    The parametrized preference probability distribution $p_{\phi}(\mathbf{y}_1 \succ \mathbf{y}_2 \mid \mathbf{x}) = 
\frac{\exp(r_{\phi}(\mathbf{y}_1, \mathbf{x}))}
{\exp(r_{\phi}(\mathbf{y}_1, \mathbf{x})) + \exp(r_{\phi}(\mathbf{y}_2, \mathbf{x}))}$ under the Bradley -Terry model \citep{bradley1952rankanalysis} is Lipschitz with respect to parameter $\phi$. This implies that
\begin{align}
    |p_{\phi}(\mathbf{z})-p_{\phi'}(\mathbf{z})|\leq L_{p}\|\phi-\phi'\|, 
\end{align}
where $\mathbf z:=(\mathbf y_1 \succ \mathbf y_2 \mid \mathbf x)$, $L_p=4D$, and $D$ denotes the upper bound on the feature representation $\|\psi(\bby,\bbx)\|\leq D$ for all $(\bbx,\bby)$. 
\end{lem}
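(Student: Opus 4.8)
The plan is to exploit the linear reward parametrization $r_{\phi}(\mathbf{y}, \mathbf{x}) = \langle \phi, \psi(\mathbf{y}, \mathbf{x}) \rangle$ that underlies the analysis of Lemma \ref{theorem_1_impossibility0} and Theorem \ref{theorem_2_impossibility}, rewrite the Bradley--Terry probability as a sigmoid of a single scalar, and then bound its gradient in $\phi$. Concretely, setting $\Delta\psi := \psi(\mathbf{y}_1, \mathbf{x}) - \psi(\mathbf{y}_2, \mathbf{x})$, I would first observe that
\begin{align}
p_{\phi}(\mathbf{z}) = \frac{\exp(\langle \phi, \psi(\mathbf{y}_1, \mathbf{x})\rangle)}{\exp(\langle \phi, \psi(\mathbf{y}_1, \mathbf{x})\rangle) + \exp(\langle \phi, \psi(\mathbf{y}_2, \mathbf{x})\rangle)} = \sigma\!\left(\langle \phi, \Delta\psi \rangle \right),
\end{align}
so that $p_\phi(\mathbf{z})$ depends on $\phi$ only through the scalar $\langle \phi, \Delta\psi\rangle$. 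This reduces the claim to a one-dimensional Lipschitz estimate composed with a linear map.

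Next I would establish Lipschitzness by bounding the gradient and invoking the mean value theorem. Differentiating gives $\nabla_\phi p_\phi(\mathbf{z}) = \sigma'(\langle \phi, \Delta\psi\rangle)\, \Delta\psi$, and using the standard facts that $\sigma'(t) = \sigma(t)(1-\sigma(t)) \le \tfrac{1}{4}$ for all $t$ together with the triangle inequality $\|\Delta\psi\| \le \|\psi(\mathbf{y}_1,\mathbf{x})\| + \|\psi(\mathbf{y}_2,\mathbf{x})\| \le 2D$, I obtain
\begin{align}
\|\nabla_\phi p_\phi(\mathbf{z})\| \le \tfrac{1}{4} \cdot 2D = \tfrac{D}{2} \le 4D .
\end{align}
Applying the mean value theorem (or integrating $\nabla_\phi p_\phi$ along the segment from $\phi'$ to $\phi$) and Cauchy--Schwarz then yields
\begin{align}
|p_\phi(\mathbf{z}) - p_{\phi'}(\mathbf{z})| \le \Big(\sup_{\phi}\|\nabla_\phi p_\phi(\mathbf{z})\|\Big)\, \|\phi - \phi'\| \le 4D\,\|\phi - \phi'\|,
\end{align}
which is exactly the claimed bound with $L_p = 4D$ (indeed the argument even gives the sharper constant $D/2$, so the stated constant holds with room to spare).

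The only genuinely substantive step is the first one: I must make explicit that the reward is linear in the bounded feature map $\psi$, since this is what collapses the dependence on $\phi$ into the single affine coordinate $\langle\phi,\Delta\psi\rangle$ and lets the uniform bound $\|\Delta\psi\|\le 2D$ do all the work. After that, the result is a routine composition of a Lipschitz scalar function with a linear map, so I anticipate no real obstacle; the main care is simply verifying the constants (the bound $\sigma'\le 1/4$ and the factor of two from the triangle inequality on $\Delta\psi$) and noting that they comfortably fit inside the stated $4D$.
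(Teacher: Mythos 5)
Your proposal is correct and follows essentially the same route as the paper: write $p_\phi(\mathbf{z})$ as a sigmoid of the linear functional $\langle \phi, \psi(\mathbf{y}_1,\mathbf{x})-\psi(\mathbf{y}_2,\mathbf{x})\rangle$, bound the gradient in $\phi$ using $\|\psi(\mathbf{y}_1,\mathbf{x})-\psi(\mathbf{y}_2,\mathbf{x})\|\le 2D$, and conclude Lipschitzness. The only difference is that you use the exact bound $\sigma'(t)=\sigma(t)(1-\sigma(t))\le \tfrac{1}{4}$, giving the sharper constant $D/2$, whereas the paper bounds the same sigmoid-derivative factor crudely by $2$ (summing the two terms of $\sigma-\sigma^2$ by the triangle inequality) to arrive at the stated $L_p = 4D$.
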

\begin{proof}
    Let us start from the definition of $p_{\phi}(\mathbf{y}_1 \succ \mathbf{y}_2 \mid \mathbf{x})$ given by
\begin{align} \label{model}
p_{\phi}(\mathbf{y}_1 \succ \mathbf{y}_2 \mid \mathbf{x}) &= 
\frac{\exp(r_{\phi}(\mathbf{y}_1, \mathbf{x}))}
{\exp(r_{\phi}(\mathbf{y}_1, \mathbf{x})) + \exp(r_{\phi}(\mathbf{y}_2, \mathbf{x}))} = \frac{1}{1 + \exp(-(r_{\phi} ({\mathbf{y}_1}, \mathbf{x}) - (r_{\phi} ({\mathbf{y}_2}, \mathbf{x}))}. %
\end{align}
From the definition of the Bradley-Terry preference model from equation \eqref{eq:bradley-terry} with the linear parametrization of the reward function as $r_{\phi} ({\mathbf{y}}, \mathbf{x}) = \langle\phi, \psi({\mathbf{y}}, \mathbf{x})\rangle$, we can write the  equality in \eqref{model} as
\begin{align}\label{model_2}
p_{\phi}(\mathbf{y}_1 \succ \mathbf{y}_2 \mid \mathbf{x}) 
&= \frac{1}{1 + \exp(-(\langle\phi, \psi({\mathbf{y}_1}, \mathbf{x})\rangle - \langle\phi, \psi({\mathbf{y}_2}, \mathbf{x})\rangle)))} 
\nonumber
\\
&= \frac{1}{1 + \exp(-\langle\phi, \psi'(\mathbf{y}_1, \mathbf{y}_2,\mathbf{x})\rangle)},
\end{align}
where we define $\psi'(\mathbf{y}_1, \mathbf{y}_2,\mathbf{x}):=\psi({\mathbf{y}_1}, \mathbf{x})- \psi({\mathbf{y}_2}, \mathbf{x})\rangle$ for the ease of notation. Next, 
 differentiating both sides in \eqref{model_2} with respect to $\phi$, we obtain
\begin{align}
\nabla_{\phi} p_{\phi}(\mathbf{y}_1 \succ \mathbf{y}_2 \mid \mathbf{x}) &= 
- \psi'(\mathbf{y}_1, \mathbf{y}_2,\mathbf{x}) \cdot \frac{\exp(-\langle\phi,\psi'(\mathbf{y}_1, \mathbf{y}_2,\mathbf{x})\rangle)}{(1 + \exp(-\langle\phi,\psi'(\mathbf{y}_1, \mathbf{y}_2,\mathbf{x})\rangle))^2} \nonumber\\
&= - \psi'(\mathbf{y}_1, \mathbf{y}_2,\mathbf{x}) \left[ \frac{1}{1 + \exp(-\langle\phi,\psi'(\mathbf{y}_1, \mathbf{y}_2,\mathbf{x})\rangle)} - \frac{1}{(1 + \exp(-\langle\phi,\psi'(\mathbf{y}_1, \mathbf{y}_2,\mathbf{x})\rangle))^2} \right] .
\end{align}
Taking the norm on both sides and applying Cauchy-Schwartz inequality, we get
\begin{align}
\| \nabla_{\phi} p_{\phi}(\mathbf{y}_1 \succ \mathbf{y}_2 \mid \mathbf{x}) \| &\leq  
\| \psi'(\mathbf{y}_1, \mathbf{y}_2,\mathbf{x}) \| \left[ \frac{1}{1 + \exp(-\langle\phi,\psi'(\mathbf{y}_1, \mathbf{y}_2,\mathbf{x})\rangle)} + \frac{1}{(1 + \exp(-\langle\phi,\psi'(\mathbf{y}_1, \mathbf{y}_2,\mathbf{x})\rangle))^2} \right]
\nonumber
\\&\leq 2 \| \psi'(\mathbf{y}_1, \mathbf{y}_2,\mathbf{x}) \|.
\end{align}
From the definition of $\psi'(\mathbf{y}_1, \mathbf{y}_2,\mathbf{x})$ and the boundedness of the feature representations, we note that $\|\psi'(\mathbf{y}_1, \mathbf{y}_2,\mathbf{x})\|=\|\psi({\mathbf{y}_1}, \mathbf{x})- \psi({\mathbf{y}_2}, \mathbf{x})\rangle\|\leq 2D$. Hence, we obtain the final bound  
\begin{align}
\| \nabla_{\phi} p_{\phi}(\mathbf{y}_1 \succ \mathbf{y}_2 \mid \mathbf{x}) \| &\leq 4 D.
\end{align}
 Hence proved.

\end{proof}

\begin{lem}\label{lemma_1}
\vspace{2mm}
The cross-entropy loss minimization for reward learning in step 2 in the RLHF pipeline (cf. \eqref{eq:reward_model}) leads to implicit weightage minimization among the user groups. Specifically, the loss function minimizes the distance to distribution $p_{\phi^*}(\bbz) = \sum_{u=1}^{ |\mathcal{U}|} \eta(u)p_u^* (\bbz)$, where $\eta$ is the implicit distribution among user groups. 
\end{lem}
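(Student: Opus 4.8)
The plan is to establish the claim by directly expanding the cross-entropy loss in \eqref{mle_upd} and rewriting it in terms of a KL divergence plus a constant that does not depend on $\phi$. Recall that by \eqref{new_definition}, the aggregated preference distribution satisfies $p^*(\mathbf{z}) = \sum_{u=1}^{|\mathcal{U}|} \eta(u)\, p^*_u(\mathbf{z})$, so I would first set $\bar{p}(\mathbf{z}) := \sum_{u=1}^{|\mathcal{U}|}\eta(u)\, p^*_u(\mathbf{z})$ and observe that the two expectations over $u$ appearing in \eqref{mle_upd} are exactly $\bar{p}(\mathbf{y}_1 \succ \mathbf{y}_2\mid\mathbf{x})$ and $\bar{p}(\mathbf{y}_1 \prec \mathbf{y}_2\mid\mathbf{x}) = 1 - \bar{p}(\mathbf{y}_1\succ\mathbf{y}_2\mid\mathbf{x})$. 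This reduces the per-sample loss to the binary cross-entropy between the (fixed) target distribution $\bar{p}$ and the parametrized model $p_\phi$ over the two outcomes $\{\succ,\prec\}$.

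The key algebraic step is the standard identity relating cross-entropy and KL divergence: for a fixed target distribution $\bar{p}$ and a model $p_\phi$,
\begin{align}
H(\bar{p}, p_\phi) = H(\bar{p}) + \mathsf{KL}(\bar{p}\,\|\,p_\phi),
\end{align}
where $H(\bar{p})$ is the entropy of the target and is independent of $\phi$. Applying this pointwise (for each fixed $\mathbf{z}$, treating the Bernoulli distribution over $\{\succ,\prec\}$) and then taking the expectation over $(\mathbf{x},\mathbf{y}_1,\mathbf{y}_2)\sim\mathcal{D}$, I would write the loss as $\mathcal{L}_R(r_\phi,\mathcal{D}) = \text{const} + \mathbb{E}\big[\mathsf{KL}(\bar{p}(\cdot\mid\mathbf{z})\,\|\,p_\phi(\cdot\mid\mathbf{z}))\big]$. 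Since the entropy term carries no dependence on $\phi$, minimizing the cross-entropy loss is equivalent to minimizing the (expected) KL divergence $\mathsf{KL}(\bar{p}\,\|\,p_\phi)$, which matches the claimed $\mathsf{KL}\big(\sum_u \eta(u)p_u^*\,\|\,p_\phi\big)$.

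Finally, to identify the minimizer I would invoke the non-negativity of KL divergence (Gibbs' inequality), which vanishes if and only if $p_\phi = \bar{p}$. Under the assumption that the parametrized family is expressive enough to represent $\bar{p}$ — i.e. there exists $\phi^*$ with $p_{\phi^*} = \bar{p}$ — the loss is minimized precisely at $p_{\phi^*}(\mathbf{z}) = \sum_{u=1}^{|\mathcal{U}|}\eta(u)\, p_u^*(\mathbf{z})$, giving the stated conclusion and identifying $\eta$ as the implicit weighting distribution over user groups.

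I expect the main subtlety to be the realizability assumption in the last step: the clean identification $p_{\phi^*} = \bar{p}$ requires that the linear reward parametrization can actually realize the aggregated preference $\bar{p}$, which is a mixture of Bradley–Terry distributions and need not itself be Bradley–Terry. A fully rigorous statement would either assume realizability or interpret $\phi^*$ as the projection of $\bar{p}$ onto the model class under the KL objective; the honest reading of this lemma (and presumably the source of the acknowledged bug fix) is that the cross-entropy objective targets the $\eta$-weighted mixture, and this is what drives the implicit averaging argument used downstream in Lemma \ref{theorem_1_impossibility0}.
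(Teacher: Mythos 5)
Your proposal is correct and reaches the paper's conclusion, but by a more direct route than the paper takes. The paper does not collapse the mixture at the outset: instead, it adds and subtracts the per-group terms $\sum_{u}\eta(u)\,p_u^*\log p_u^*$ inside the cross-entropy, arriving first at the intermediate form $\mathcal{L}_R(r_\phi,\mathcal{D})=\mathbb{E}\bigl[\sum_{u}\eta(u)\,\mathsf{KL}(p_u^*\,\|\,p_\phi)+\eta(u)\,\mathsf{H}(p_u^*)\bigr]$ (its equation \eqref{final_2oss}), and only then shows that this $\eta$-weighted sum of per-group divergences equals $\mathsf{KL}(p^*\,\|\,p_\phi)$ plus $\phi$-independent entropy terms, so that the minimizer is the mixture $\sum_u \eta(u)p_u^*$. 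You instead apply the identity $H(\bar p,p_\phi)=H(\bar p)+\mathsf{KL}(\bar p\,\|\,p_\phi)$ once, directly against the target $\bar p=\sum_u\eta(u)p_u^*$, bypassing the per-group decomposition entirely. What the paper's detour buys is that the intermediate form makes the lemma's phrase ``implicit weightage minimization among the user groups'' literal: the loss is exactly an $\eta$-weighted sum of KL divergences to each group's preference distribution, and the learned model is the KL-barycenter of the groups, which is the interpretation invoked downstream. What your route buys is brevity and one genuine improvement in rigor: you flag that identifying the minimizer with $\bar p$ requires realizability (a mixture of Bradley--Terry distributions need not itself be Bradley--Terry under the linear reward parametrization), whereas the paper asserts the identification $p_{\phi^*}=\sum_u\eta(u)p_u^*$ without comment; interpreting $\phi^*$ as the KL-projection of $\bar p$ onto the model class, as you suggest, is the honest reading. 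A further small point in your favor: the paper's claim en route that the objective is minimized ``when $\sum_{u}\eta(u)\,\mathsf{KL}(p_u^*\,\|\,p_\phi)=0$'' is loose, since that sum is strictly positive whenever the $p_u^*$ differ; your formulation avoids this slip by only asserting that the $\phi$-dependent part of the loss is $\mathsf{KL}(\bar p\,\|\,p_\phi)$.
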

\begin{proof}[Proof of Lemma \ref{lemma_1}]
From the equality in \eqref{new_definition}, we note that we can write
$p^*(\mathbf y_1 \succ \mathbf y_2 \mid \mathbf x)=\mathbb{E}_{u} [p^*_u(\mathbf y_1 \succ \mathbf y_2 \mid \mathbf x)]$. With this notation, the loss function for reward learning in \eqref{loss} can be written as 
\begin{align}\label{mle_new_22}
    \mathcal{L}_R(r_{\phi}, \mathcal{D})&= - \mathbb{E}_{(\mathbf x, \mathbf y_1, \mathbf y_2)\sim \mathcal{D}}\Big[ \mathbb{E}_{u} [p^*_u(\mathbf y_1 \succ \mathbf y_2 \mid \mathbf x)] \log p_{\phi}(\succ) + \mathbb{E}_{u} [p^*_u(\mathbf y_1 \prec \mathbf y_2 \mid \mathbf x)] \log p_{\phi}(\prec) \Big],
\end{align}
where the equation incorporates the individual user group's optimal $p_u^*$ (we denote the corresponding individual optimal reward parameter by $\phi_u^*$) in the likelihood objective.  
 As a first step, let us decompose \eqref{mle_new_22} as
\begin{align}\label{final}
   \mathcal{L}_R&(r_{\phi}, \mathcal{D})\nonumber\\
     =&  \mathbb{E}_{(\mathbf x, \mathbf y_1, \mathbf y_2)}\Bigg[\sum_{u=1}^{ |\mathcal{U}|} [p_u^* (\bby_1\succ\bby_2|\bbx) \eta(u)] \log p_{\phi}(\succ) - \sum_{u=1}^{ |\mathcal{U}|} [p_u^* (\bby_1\succ\bby_2|\bbx) \eta(u) \log p_u^*(\bby_1\succ\bby_2|\bbx)]  \nonumber\\
    & \hspace{16mm}+ \sum_{u=1}^{ |\mathcal{U}|} [p_u^* (\bby_1\prec\bby_2|\bbx) \eta(u)] \log p_{\phi}(\prec) - \sum_{u=1}^{ |\mathcal{U}|} [p_u^* (\bby_1\prec\bby_2|\bbx) \eta(u) \log p_u^*(\bby_1\succ\bby_2|\bbx)]  \nonumber\\ 
    & \hspace{17mm} + \sum_{u=1}^{ |\mathcal{U}|} p_u^* (\bby_1\succ\bby_2|\bbx) \eta(u) \log p_u^*(\bby_1\succ\bby_2|\bbx) + \sum_{u=1}^{ |\mathcal{U}|} p_u^* (\bby_1\prec\bby_2|\bbx) \eta(u) \log p_u^*(\bby_1\prec\bby_2|\bbx)\Bigg],
\end{align}
where, we add and subtract $\sum_{u=1}^{ |\mathcal{U}|} [p_u^* (\bby_1\succ\bby_2|\bbx) \eta(u) \log p_u^*(\bby_1\succ\bby_2|\bbx)]$ and $\sum_{u=1}^{ |\mathcal{U}|} [p_u^* (\bby_1\prec\bby_2|\bbx) \eta(u) \log p_u^*(\bby_1\prec\bby_2|\bbx)]$ to get the final expression. After rearranging the terms in \eqref{final}, we get
\begin{align}\label{mle_impl}
   \mathcal{L}_R(r_{\phi}, \mathcal{D})
     =& - \mathbb{E}_{(\mathbf x, \mathbf y_1, \mathbf y_2)\sim \mathcal{D}}\Bigg[\sum_{u=1}^{ |\mathcal{U}|} [p_u^* (\bby_1\succ\bby_2|\bbx) \eta(u)] \Big(\log p_{\phi}(\succ) - \log p_u^*(\bby_1\succ\bby_2|\bbx)\Big) \\ \nonumber
    & \hspace{25mm}+ \sum_{u=1}^{ |\mathcal{U}|} [p_u^* (\bby_1\prec\bby_2|\bbx) \eta(u)] \Big(\log p_{\phi}(\prec) - \log p_u^*(\bby_1\prec\bby_2|\bbx)\Big) \\ \nonumber
    & \hspace{30mm} + \sum_{u=1}^{ |\mathcal{U}|} p_u^* (\bby_1\succ\bby_2|\bbx) \eta(u) \log p_u^*(\bby_1\succ\bby_2|\bbx) \\ \nonumber
    & \hspace{35mm}+ \sum_{u=1}^{ |\mathcal{U}|} p_u^* (\bby_1\prec\bby_2|\bbx) \eta(u) \log p_u^*(\bby_1\prec\bby_2|\bbx)\Bigg] \\ \nonumber
    & =  - \mathbb{E}_{x , y_1, y_2}\Bigg[\sum_{u=1}^{ |\mathcal{U}|} \eta(u) \Big(p_u^* (\bby_1\succ\bby_2|\bbx) \cdot\log \frac{p_{\phi}(\bby_1\succ\bby_2|\bbx)}{p_u^*(\bby_1\succ\bby_2|\bbx)}+ p_u^* (\bby_1\prec\bby_2|\bbx)\cdot\log \frac{p_{\phi}(\bby_1\prec\bby_2|\bbx)}{p_u^*(\bby_1\prec\bby_2|\bbx)}\Big) \\ \nonumber
    &\ \ \  + \sum_{u=1}^{ |\mathcal{U}|} p_u^* (\bby_1\succ\bby_2|\bbx) \eta(u) \log p_u^*(\bby_1\succ\bby_2|\bbx) + \sum_{u=1}^{ |\mathcal{U}|} p_u^* (\bby_1\prec\bby_2|\bbx) \eta(u) \log p_u^*(\bby_1\prec\bby_2|\bbx)\Bigg]. 
\end{align}
Next, by utilizing the definition of KL-divergence and entropy to get the final expression as follows 
\begin{align}\label{final_2oss}
     \mathcal{L}_R(r_{\phi}, \mathcal{D})& = \mathbb{E}_{(\mathbf x, \mathbf y_1, \mathbf y_2)\sim \mathcal{D}} \Bigg[\sum_{u=1}^{ |\mathcal{U}|} \eta(u) \mathsf{KL} (p_u^* || p_{\phi}) + \eta(u) \mathsf{H}(p_u^*)\Bigg].
\end{align}
From the above objective in \eqref{final_2oss}, we note that the objective is minimized for $\phi$ when $\sum_{u=1}^{ |\mathcal{U}|} \eta(u) \mathsf{KL} (p_u^* || p_{\phi})= 0.$
To proceed further, let us focus on the term $\sum_{u=1}^{ |\mathcal{U}|} \eta(u) \mathsf{KL} (p_u^* || p_{\phi})$ from equation \eqref{final_2oss} as
\begin{align} \label{kl_exp12222}
    \sum_{u=1}^{ |\mathcal{U}|} \eta(u) \mathsf{KL} (p_u^* || p_{\phi})& = \sum_{u=1}^{ |\mathcal{U}|} \eta(u) \sum_\bbz  p_u^*(\bbz) \log \frac{p_u^*(\bbz)}{p_{\phi}(\bbz)} \nonumber\\ 
    & = \sum_{u=1}^{ |\mathcal{U}|} \eta(u) \sum_\bbz  p_u^* (\bbz) \log p_u^*(\bbz) - \sum_{u=1}^{ |\mathcal{U}|} \eta(u) \sum_\bbz  p_u^* (\bbz) \log p_{\phi}(\bbz) \nonumber\\ 
    & = -\sum_{u=1}^{ |\mathcal{U}|} \eta(u) H(p_u^*) - \sum_\bbz \log p_{\phi}(\bbz) \underbrace{\sum_{u=1}^{ |\mathcal{U}|} \eta(u)  p_u^* (\bbz)}_{=p^* (\bbz)}.  
    \end{align}
    From the definition of KL d in \eqref{new_definition}, it holds that
\begin{align} \label{kl_exp122}
    \sum_{u=1}^{ |\mathcal{U}|} \eta(u) \mathsf{KL} (p_u^* || p_{\phi})& = -\sum_{u=1}^{ |\mathcal{U}|} \eta(u) H(p_u^*) - \sum_\bbz p^* (\bbz)\log p_{\phi}(\bbz)  .  
    \end{align}
    Next, by adding and subtracting the term $\sum_\bbz p^* (\bbz)\log p^* (\bbz)$ in the right hand side of \eqref{kl_exp122}, we get
    \begin{align}
   \sum_{u=1}^{ |\mathcal{U}|} \eta(u) \mathsf{KL} (p_u^* || p_{\phi})& = -\sum_{u=1}^{ |\mathcal{U}|} \eta(u) H(p_u^*) - \sum_\bbz p^* (\bbz)\log p_{\phi}(\bbz)   + \sum_\bbz p^* (\bbz)\log p^* (\bbz) - \sum_\bbz p^* (\bbz)\log p^* (\bbz) \\ \nonumber
    & = - H (p^*) - \sum_{u=1}^{ |\mathcal{U}|} \eta(u) H(p_u^*) + \mathsf{KL} (p^* || p_{\phi}).
\end{align}
Now, replacing this expression in the original implicit minimization objective in  \eqref{final_2oss}, we note that the minimization will be achieved when $p_{\phi^*}(\bbz) = \sum_{u=1}^{ |\mathcal{U}|} \eta(u)  p_u^* (\bbz)$ for all $z$. Hence, the reward learning objective is implicitly learning a weighted combination, which would lead to a significant gap in individual utilities, as discussed in the subsequent section.
\end{proof}

\section{ Proof of Lemma \ref{theorem_1_impossibility0}}\label{theorem_1_impossibility_proof}

\begin{proof}
Let us reconsider the reward learning loss $\mathcal{L}_R(r_{\phi}, \mathcal{D})$ whose empirical version is minimized to obtain parameter $\hat{\phi}_{\texttt{MLE}}$ which is the approximation to the true parameter $$\phi^* := \arg \min_{\phi} - \mathbb{E}\bigg[\sum_\bbz p_{\phi^*}(\bbz) \log p_{\phi}(\bbz)\bigg]. $$ As discussed in Sec. \ref{reward_2earning}, due to the presence of diverse human user groups, a $\phi^*_u$ which, is user group specific, will also exist. Our goal is to characterize the gap between $\hat{\phi}_{\texttt{MLE}}$ and $\phi^*_u$ defined as
\begin{align}\label{new}
    \Delta_u^r := \hat{\phi}_{\texttt{MLE}} - \phi_u^*, 
\end{align}
where the optimal $\phi_u^*$ for the user group $u$ is given by
\begin{align}\label{ind_opt}
    \phi_u^* := \arg \min_{\phi} - \mathbb{E}\bigg[\sum_\bbz p_u^*(\bbz) \log p_{\phi}(\bbz)\bigg].
\end{align}
Let us consider the idealistic setting of infinite data under which we know that  $\texttt{MLE}$ would converge to optimal $\phi^*$ \citep{zhu2023principled}. Hence, to proceed further, let us add subtract $\phi^*$ in the right-hand side of \eqref{new}, we get 
\begin{align}\label{delta_net}
    \Delta_u^r = \underbrace{\hat{\phi}_{\texttt{MLE}} - \phi^*}_{=0} +  {\phi^* - \phi_u^*}.
\end{align}
 To derive the lower bound on the reward suboptimality $\Delta_u^r$, we begin with the definition of the total variation distance as 
\begin{align}\label{imp1}
    \text{TV }(p_{\phi_u^*}, p_{\phi^*}) &= \frac{1}{2} \sum_\bbz |p_{\phi_u^*}(\bbz) - p_{\phi^*}(\bbz)|. 
\end{align}
From the Lipschitzness of the preference probability as derived in Lemme \ref{lemma_2}, we can write
\begin{align}\label{imp2}
    \text{TV }(p_{\phi_u^*}, p_{\phi^*}) & \leq 4D\|\phi_u^* - \phi^*\|,
\end{align}
where the multiplication of $2$ comes from the fact that there are two terms in the summation in the right side of \eqref{imp1} (cf. Sec. \ref{reward_2earning}).
From the lower bound in \eqref{imp1} and the expression in \eqref{delta_net}, we obtain
\begin{align}\label{delta_net2}
    \|\Delta_u^r\|\geq \frac{1}{4D}\text{TV }(p_{\phi_u^*}, p_{\phi^*}).
\end{align}
Next, to obtain a lower bound on the term $\text{TV }(p_{\phi_u^*}, p_{\phi^*})$, we begin with the definition of the total variation distance $TV(p_{\phi_u^*}, p_{\phi_j^*}) $ as 
\begin{align}\label{new_proof1}
TV(p_{\phi_u^*}, p_{\phi_j^*}) 
        &= \frac{1}{2} \sum_z |p_{\phi_u^*}(z) - p_{\phi_j^*} (z)| \nonumber\\ 
        & = \frac{1}{2} \sum_z |p_{\phi_u^*}(z) - p_{\phi^*}(z) + p_{\phi^*}(z) - p_{\phi_j^*} (z)| \nonumber\\ 
        & \leq \frac{1}{2} \sum_z |p_{\phi_u^*}(z) - p_{\phi^*}(z)| + \frac{1}{2} \sum_z |p_{\phi^*}(z) - p_{\phi_j^*} (z)|\nonumber \\ 
        & = TV(p_{\phi_u^*}, p_{\phi^*}) + TV(p_{\phi_j^*}, p_{\phi^*}).
\end{align}
In \eqref{new_proof1}, the first equality holds from the definition of TV norm distance, the second equality holds because we add subtract $p_{\phi^*}(z)$ inside the norm, we used triangle inequality for the third inequality, and then again utilize the definition of TV distance to write the final equality. After rearranging the terms in \eqref{new_proof1}, we can write 
\begin{align}\label{new_proof2}
TV(p_{\phi_u^*}, p_{\phi^*}) \geq  TV(p_{\phi_u^*}, p_{\phi_j^*})  - TV(p_{\phi_j^*}, p_{\phi^*}).
\end{align}
From the definition of $p_{\phi^*}(z) =  \sum_{k=1}^{|\mathcal{U}|} n(k) p_{\phi_k^*}(z)$ and from the property of TV distance and Jensen's inequality, we know that $ TV(p_{\phi_j^*}, p_{\phi^*}) \leq \sum_{k=1}^{|\mathcal{U}|} \eta(k) TV(p_{\phi_k^*}, p_{\phi_j^*})$. This would imply that $ -TV(p_{\phi_j^*}, p_{\phi^*}) \geq -\sum_{k=1}^{|\mathcal{U}|} \eta(k) TV(p_{\phi_k^*}, p_{\phi_j^*})$, which we utilize in the right hand side of \eqref{new_proof2} to obtain
\begin{align}\label{new_proof3}
TV(p_{\phi_u^*}, p_{\phi^*}) & \geq  TV(p_{\phi_u^*}, p_{\phi_j^*})  - \sum_{k=1}^{|\mathcal{U}|} \eta(k) TV(p_{\phi_k^*}, p_{\phi_j^*}) \nonumber \\
& = (1 -\eta(u)) TV(p_{\phi_u^*}, p_{\phi_j^*})  - \sum_{k \neq u} \eta(k) TV(p_{\phi_k^*}, p_{\phi_j^*}),
\end{align}
which demonstrates the final expression of our lower bound on the term $TV(p_{\phi_u^*}, p_{\phi^*})$, which holds in general. Considering the second term in the right-hand side of \eqref{new_proof3}, we can write
\begin{align}
    \sum_{k \neq u} \eta(k) TV(p_{\phi_k^*}, p_{\phi_j^*}) \leq TV(p_{\phi^*_{k_{\max}(j)}}, p_{\phi_j^*}) \sum_{k \neq u} \eta(k),
\end{align}
where $TV(p_{\phi^*_{k_{\max}(j)}}, p_{\phi_j^*}):= \max_{k\neq u} TV(p_{\phi^*_{k}}, p_{\phi_j^*})$ for a given $j$. From the definition of weights $\eta$, we know that $ \sum_{k \neq u} \eta(k)=(1-\eta(u))$, hence we can write
\begin{align}\label{upper_new}
    \sum_{k \neq u} \eta(k) TV(p_{\phi_k^*}, p_{\phi_j^*}) \leq (1-\eta(u)) TV(p_{\phi^*_{k_{\max}(j)}}, p_{\phi_j^*}).
\end{align}
Utilizing the upper bound of \eqref{upper_new} into the right hand side of \eqref{new_proof3}, we obtain
\begin{align}\label{new_proof33}
TV(p_{\phi_u^*}, p_{\phi^*}) & \geq  (1 -\eta(u)) \Big[TV(p_{\phi_u^*}, p_{\phi_j^*})-TV(p_{\phi^*_{k_{\max}(j)}}, p_{\phi_j^*})\Big].
\end{align}
In the above expression, we note from the right-hand side that the lower bound in \eqref{new_proof33} holds for all $u$ and $j$. This implies that the right-hand side in \eqref{new_proof33} will be either positive or negative for $(u,j)$ pairs. But an interesting point to note here is that if the right-hand side is lower bounded away from zero, even for one $(u,j)$ pair, that is an impossibility result, for the corresponding $u$ which is the minority user.  To proceed, as a first step, we find the most diverse user-group ($i, j \in \mathcal{U}$) defined as $(i^*, j^*)$ be the pair of users with the maximum total variation distance, given by
\begin{align}
   (i^*, j^*) = \arg\max_{i, j \in \mathcal{U}} TV(p_{\phi_i}, p_{\phi_j}).
\end{align}
We next define the minority user group $u^*$ as the group with the largest total variation distance from the remaining user groups $k \in \mathcal{U}, k \neq j^*$, where we quantify this by computing the maximum distance to any other user-groups in the set $\mathcal{U}$. Assuming there exists an 
\begin{align}
   u^* = \arg\max\bigg\{\max_{k\neq j} \text{TV}(p^*_{i^*},p^*_k), \max_{k\neq i} \text{TV}(p^*_{j^*},p^*_k)\bigg\}.
\end{align}
Thus in this process, we select that user-group which is at maximum distance from the set $\mathcal{U}$ and denote as the minority user-group $u^*$. Assuming uniqueness in the minority group definition i.e we assume that there exists an unique user-group which is at a maximum distance to the group $j$, we have that $TV(p_{\phi_u^*}, p_{\phi_j^*}) > TV(p_{\phi^*_{k_{\max}(j)}}, p_{\phi_j^*})$ for $j = j^*$. Using the equality on the right hand side of \eqref{delta_net2}, we obtain
\begin{align}\label{delta_net2_two}
    \|\phi^* - \phi_u^*\|\geq \frac{\epsilon (1 -\eta(u))}{4D} ,
\end{align}
where $\epsilon:=TV(p_{\phi_u^*}, p_{\phi_j^*}) - \max_{k\neq u} TV(p_{\phi^*_{k}}, p_{\phi_j^*}) >0$ (for $j = j^*$). 
%
 Hence proved.

\end{proof}

\section{Proof of Theorem \ref{theorem_2_impossibility}}\label{proof_theoem_1}

\begin{proof}
We can define the alignment gap of RLHF model $  \pi^*_{\text{RLHF}}$ to a specific  user group $u$  as
\begin{align}\label{align_gap_two}
    \text{Align-Gap}(\pi_{\text{RLHF}}): =F_{r_{\phi_u^*}}(\pi_u^*)-F_{r_{\phi_u^*}}(\pi_{\text{RLHF}}).
\end{align}
%
We note that in this specific RLHF setting under the KL-based regularization, the objective $-F_{r_{\phi}}(\pi)$ satisfies strong convexity w.r.t $\pi$ with strong convexity parameter $\mu =1$, hence it holds that
\begin{align}
  \text{Align-Gap}(\pi_{\text{RLHF}}) &\geq \frac{1}{2} \|\pi^* - \pi^*_u\|^2.\end{align}
  Now utilizing that $\log(\pi(\bby/x))$ is Lipschitz continuous with parameter $L_{\pi} = \frac{1}{c}$, under the condition that there exists some $c>0$ such that $\pi(y|x) \geq c$ for all $x,y$, we get
  \begin{align}\label{align_sub1}
      \text{Align-Gap}(\pi_{\text{RLHF}}) 
    & \geq  \frac{1}{2 L_{\pi}} \|\log \pi^* - \log \pi^*_u\|^2. \end{align}
    From the results in \citep{rafailov2023direct}, we can derive an analytical mapping from reward functions to optimal policies for the KL-constrained reward maximization objective as denied in \eqref{eq:RL2} as :
    \begin{align}\label{dpo}
        \pi_r(y\mid x) = \frac{1}{Z(x)}\pi_{\text{ref}}(y\mid x)\exp\left(\frac{1}{\beta}r(y, x)\right)
    \end{align}
    where $\pi_r$ is the optimal policy under the reward $r$ and $Z(x)$ is the partition function given as $Z(x) =\sum_{y}\pi_{\text{ref}}(y\mid x)\exp\left(\frac{1}{\beta}r(y, x)\right)$. Note that such an equivalence is specific to the RLHF problem under the Bradley Terry preference model as shown in \citep{rafailov2023direct}. Next, replacing equation \eqref{dpo} in the equation \eqref{align_sub1}, we get
        \begin{align}\label{rwd_param_norm}
  \text{Align-Gap}(\pi_{\text{RLHF}}) 
        & \geq  \frac{ 1}{2 L_{\pi} \beta^2} \|r_{\phi^*}-r_{\phi_u^*}\|^2 \\ \nonumber
        & =  \frac{1}{2 L_{\pi} \beta^2} \|\langle\Psi, \phi^{*}-\phi^{*}_u \rangle\|^2.
        \end{align}
As stated in \eqref{model_2}, under the linearly parametrized reward function, we have $r_{\phi} (\mathbf{y}, \mathbf{x}) = \langle\phi, \psi({\mathbf{y}}, \mathbf{x})\rangle$, where the parameter $\phi \in \mathbb{R}^d$ and similarly $\psi({\mathbf{y}}, \mathbf{x}) \in \mathbb{R}^d$.  Let $(x,y) \in  \mathbb{R}^n$ and we denote the feature matrix $\Psi \in \mathbb{R}^{n \times d}$ as 
$\begin{bmatrix}
\Psi^T = \Psi(y_1,x_1) & \Psi(y_2,x_2) & \cdots & \Psi(y_n,x_n)
\end{bmatrix}$, replacing in \eqref{rwd_param_norm}, we get the final expression. Next, expanding the norm on the right hand side, we obtain
               \begin{align}
  \text{Align-Gap}(\pi_{\text{RLHF}})  \geq  &  \frac{1}{2 L_{\pi} \beta^2} (\phi^{*} -\phi_u^{*})^T  \Psi^T \Psi   (\phi^{*} -\phi_u^{*}).
        \end{align}
Next we lower-bound the matrix norm of $\Psi^T \Psi \in \mathbb{R}^{d \times d}$ with the minimum eigen value $\lambda_{\psi}$ as
        \begin{align}\label{here_4}
       \text{Align-Gap}(\pi_{\text{RLHF}})  & \geq  \frac{\lambda_{\psi}}{4 L_{\pi} \beta^2} \|\phi^*-\phi_u^*\|^2,
       \end{align}
       where we obtain the lower bound in terms of the reward suboptimality. From the statement of Lemma \ref{theorem_1_impossibility0}, we can lower bound the right hand side in \eqref{here_4} as follows
       \begin{align}
   \text{Align-Gap}(\pi_{\text{RLHF}})  \geq & \frac{ \lambda_{\psi} }{4 L_{\pi} \beta^2}  \frac{\epsilon (1 -\eta(u))}{16D^2},
\end{align}
where $\epsilon:=TV(p_{\phi_u^*}, p_{\phi_j^*}) - \max_{k\neq u} TV(p_{\phi^*_{k}}, p_{\phi_j^*}) >0$.
Hence proved. 

\end{proof}

\section{Additional Details of the Experiments}\label{appendix_experiments}
In this section, we provide additional details of the experiments in Section \ref{experiments}.

\begin{table*}[h]
\caption{Dataset Summary}
\label{dataset_summary}
\begin{center}
\begin{small}
\begin{tabular}{c | c }
\toprule
User Group & Preference Prompt \\
\midrule
P1A & Generate/Choose a response that can be easily understood by an elementary school student.\\
P1B & Generate/Choose a response that only a PhD Student in that specific field could understand. \\
P2A & Generate/Choose a response that is concise and to the point, without being verbose. \\
P2B & Generate/Choose a response that is very informative, without missing any background information.\\
P3A & Generate/Choose a response that is friendly, witty, funny, and humorous, like a close friend.\\
P3B & Generate/Choose a response (that answers) in an unfriendly manner.\\

\end{tabular}
\end{small}
\end{center}
\end{table*}
\begin{figure}[h]
    \centering 
    \subfloat[Testing Distribution on Dataset P1A/P1B]{
        \includegraphics[width=0.5\columnwidth]{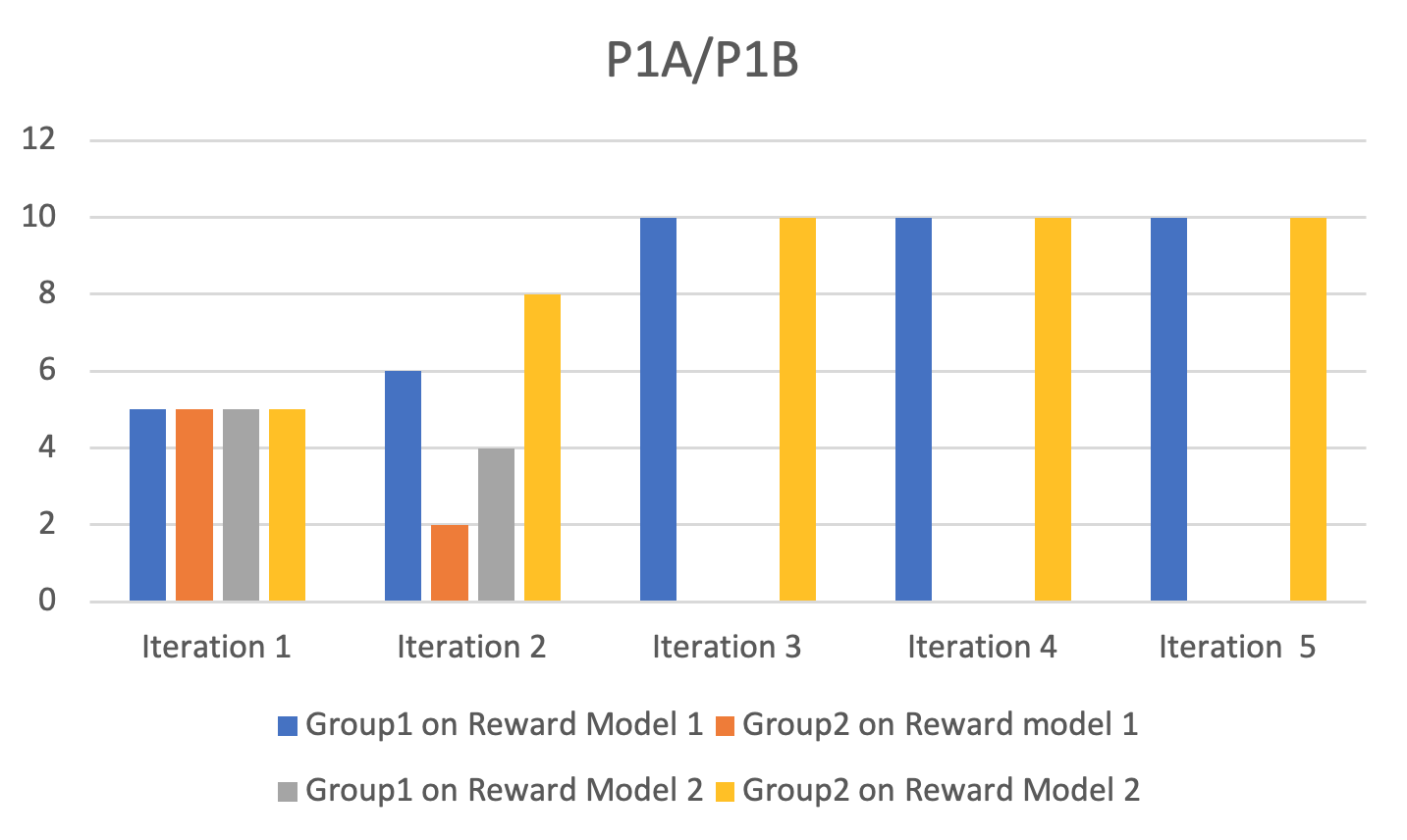}
        \label{fig:P1TestingDistribution}
    }
    \subfloat[Accuracy on Dataset P1A/P1B]{ 
        \includegraphics[width=0.5\columnwidth]{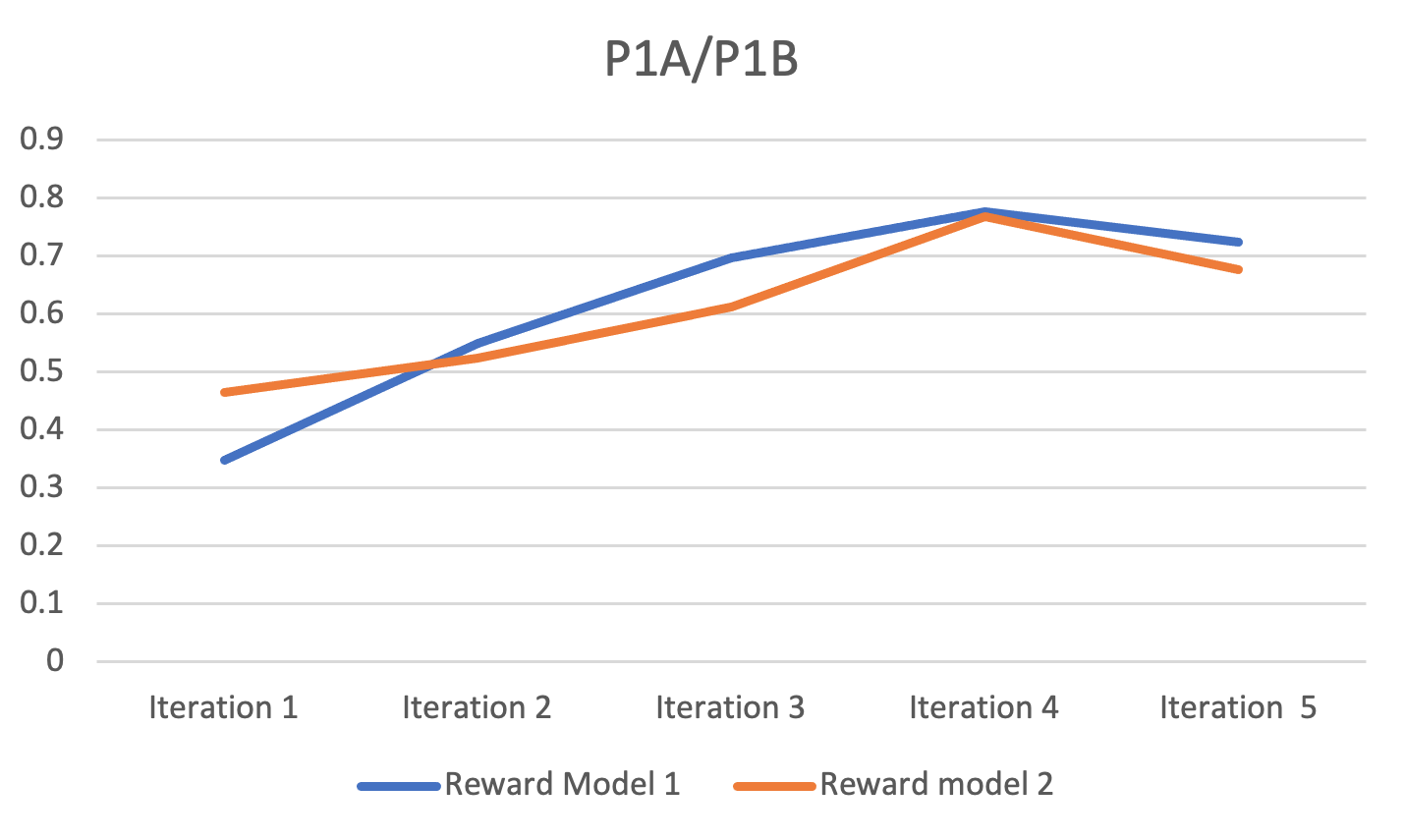}
        \label{fig:P1Accuracy}
    }
    \caption{Results on Dataset P1A/P1B.}
    \label{fig:result1}
\end{figure}

\section{Additional Experiments in Robotics Navigation Tasks}
In this section, we show that the proposed ideas are well extendable to reinforcement learning in general. We show the performance of MaxMin alignment with single reward RLHF on simple gridworld navigation in Figure \ref{fig:enter-label_appendix}.

\begin{figure*}[h]
    \hspace{8mm}\includegraphics[width=1.2\textwidth]{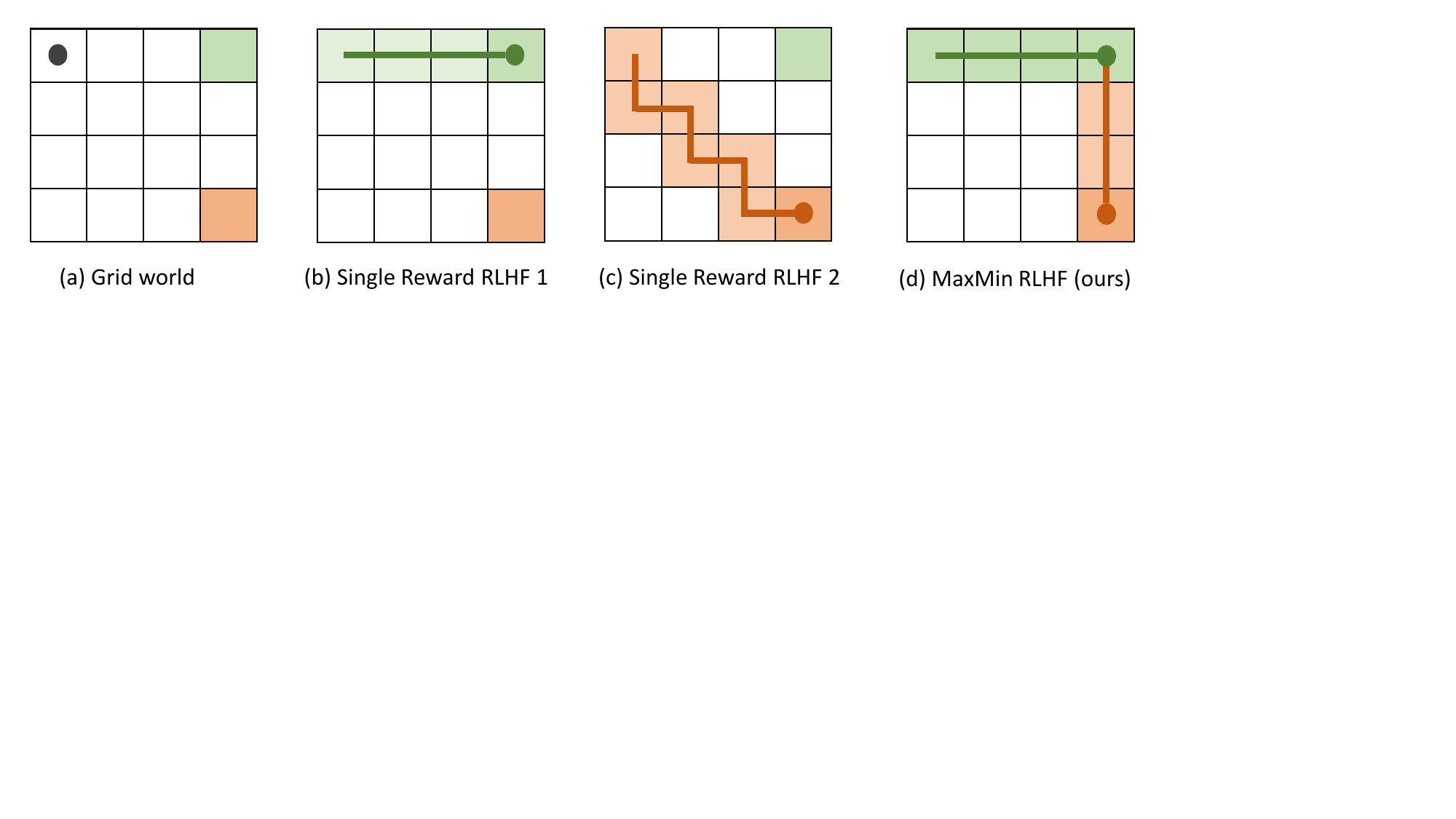}\vspace{-7cm}
     \caption{(a) This figure shows a GridWorld navigation scenario where say a government supported vehicle  which needs to distribute goods among the two groups denoted by \textcolor{green}{green} and \textcolor{orange}{orange} boxes. In the above figure, (b) shows the trajectory when only \textcolor{green}{green} user preferences are considered to decide the vehicle path. (c) shows the trajectory when only \textcolor{orange}{green} user preferences are considered to decide the vehicle path. (d) shows the result for our proposed formulation, where our goal is to maximize the social utility, which makes sure to develop a robust solution to satisfy all user preferences.}
    \label{fig:enter-label_appendix}
\end{figure*}

\end{document}